\newcolumntype{C}[1]{>{\centering\arraybackslash}p{#1}}
\newcolumntype{L}[1]{>{\raggedright\arraybackslash}m{#1}} 
\newtheorem{definition}{Definition}[section]
\newtheorem{theorem}{Theorem}[section]
\newtheorem{corollary}{Corollary}[theorem]
\newtheorem{assumption}{Assumption}[section]
\definecolor{lightgray}{gray}{0.95}
\begin{document}
\runningtitle{Algorithmic Accountability in Small Data}
\runningauthor{Jarren Briscoe, Garrett Kepler, Daryl Deford, Assefaw Gebremedhin}

\newcommand{\BibTeX}{B\kern-.05em{\sc i\kern-.025em b}\kern-.08em\TeX}

\twocolumn[

\aistatstitle{Algorithmic Accountability in Small Data:\\Sample-Size-Induced Bias Within Classification Metrics}

\aistatsauthor{Jarren Briscoe$^{1}$ \And Garrett Kepler$^{2}$ \And Daryl Deford$^{2}$ \And Assefaw Gebremedhin$^{1}$}

\aistatsaddress{$^{1}$School of Electrical Engineering \& Computer Science, Washington State University \\ $^{2}$Department of Mathematics and Statistics, Washington State University}
]

\begin{abstract}
Evaluating machine learning models is crucial not only for determining their technical accuracy but also for assessing their potential societal implications. While the potential for low-sample-size bias in algorithms is well known, we demonstrate the significance of sample-size bias induced by combinatorics in classification metrics. This revelation challenges the efficacy of these metrics in assessing bias with high resolution, especially when comparing groups of disparate sizes, which frequently arise in social applications. We provide analyses of the bias that appears in several commonly applied metrics and propose a model-agnostic assessment and correction technique. Additionally, we analyze counts of undefined cases in metric calculations, which can lead to misleading evaluations if improperly handled. This work illuminates the previously unrecognized challenge of combinatorics and probability in standard evaluation practices and thereby advances approaches for performing fair and trustworthy classification methods.
\end{abstract}


\section{INTRODUCTION}
Classification metrics derived from confusion matrices are fundamental tools in evaluating machine learning models, particularly in binary classification tasks. Metrics such as accuracy, precision, recall, and Matthews Correlation Coefficient (MCC) provide critical insights into model performance. However, an often-overlooked issue is the impact of sample size on these metrics, especially when comparing performance across different groups or datasets of varying sizes.

Small sample sizes introduce jaggedness (significant variability with discrete ``jumps'' in the metrics' score distributions) due to the discrete and combinatorial nature of confusion matrices. This variability can cause misleading interpretations of a model's performance and fairness. For instance, minor changes in the sample size can cause disproportionate shifts in metric score distributions, potentially exaggerating or obscuring biases among groups.

Moreover, certain metrics become undefined under specific conditions, such as divisions by zero. These undefined cases, or ``holes'', in the metric space further challenge the reliability of performance assessments, especially in groups with limited data.

Despite the widespread use of classification metrics, there has been limited attention to the systematic biases and inconsistencies induced by sample size variations. Existing literature often assumes large sample sizes or overlooks the combinatorial complexities that small samples introduce, leaving a gap in understanding how to accurately assess model performance.

We address these challenges by providing a comprehensive analysis of sample-size-induced biases in confusion-matrix metrics. Our contributions include:

\begin{itemize} \item \textbf{Demonstrating Metric Variability}: We present both theoretical and empirical evidence of the jaggedness and variability in classification metrics caused by small sample sizes. By illustrating how these effects distort performance evaluations, we underscore the need for careful interpretation of metrics in small-sample scenarios.

\item \textbf{Quantifying Undefined Cases}: We systematically count the number of ``holes''—situations where metrics are undefined—in 21 metrics. This quantification reveals the extent to which certain metrics may be unreliable or misleading under specific sample configurations.

\item \textbf{Metric Alignment Trial for Checking Homogeneity (MATCH) Test}: We introduce a statistical test that assesses the significance of a group's metric score by comparing it to the distribution of scores from a reference group. This approach allows us to determine whether observed differences are due to genuine performance disparities or merely artifacts of sample size variability.

\item \textbf{Cross-Prior Smoothing (CPS)}: We propose a smoothing technique that incorporates prior information from other groups to enhance metric reliability. By adjusting confusion matrix counts with cross-group priors, CPS improves the stability and reduces the error rates of metric estimates.

\item \textbf{Open-Source Implementation}: We provide an open-source repository of our code and experiments to facilitate reproducibility and encourage further research in this area.\footnote{\href{https://github.com/jarrenbr/Algorithmic-Accountability-in-Small-Data}{https://github.com/jarrenbr/Algorithmic-Accountability-in-Small-Data}}

\end{itemize}

Our work illuminates a critical issue with classification metrics and offers practical solutions to assess and improve their reliability. By addressing the biases introduced by sample size, we aim to enhance the fairness and accuracy of model evaluations, particularly in applications where group comparisons are essential.

\section{RELATED WORK}
As machine learning models are increasingly used in high-stakes domains, ensuring fairness across groups is critical. However, the combinatorial nature of classification metrics remains underexplored. This study examines how these metrics evolve with changing sample sizes, revealing potential biases that may exacerbate or hide disparities among groups. Understanding these distributional shifts can help develop more equitable AI systems, particularly in settings where sample sizes differ significantly. We provide metric definitions in Section~\ref{sec:metricDef}.

Confusion matrices are fundamental tools for assessing classification performance, providing the basis for scalar metrics like accuracy and MCC. While these metrics are widely used, they can exhibit biases related to sample size. As such, this bias affects common fairness metrics, including disparate impact \citep{diOriginal}, equalized odds \citep{equalizedOddsOriginal}, and predictive parity \citep{amazonFairness}. By highlighting this issue, we aim to enhance the robustness of metric comparisons across datasets with varying sample sizes.

In social data analyses, accurate measurement of fairness is crucial, particularly in legal contexts where metrics like disparate impact assess potential discrimination. For instance, in employment or housing discrimination cases \citep{SPP_DI,LPR_DI} and recidivism prediction \citep{recidivismFairness,Moore2023Pretrial}, datasets often feature small or imbalanced sample sizes. Recognizing how distributional shifts in metrics affect evaluations is vital for ensuring fair outcomes. The recently introduced, legally based metric, Objective Fairness Index \citep{briscoe2024facets} is also susceptible to such shifts.

For different reasons, models themselves can create similar shifts in predictions over groups of data (instead of sample size). Recent work by \citet{feng2024model} frames this as a changepoint detection problem and detect if models are properly calibrated for all groups. However, they do not consider classification metrics nor their combinatorics. Our approach complements their method by specifically addressing biases in these classification metrics, which are critical for fairness evaluations.

Statistical methods designed for small sample sizes are relevant to our approach. Techniques like Laplacian smoothing, commonly used in natural language processing to handle unobserved events \citep{NLP_Book}, can be adapted to confusion matrices where certain classes may not be represented. Discussions around the use of non-informative or weakly informative priors in Bayesian estimation for small samples \citep{BISSS, WIP} inform our methodology in developing robust methods under data limitations.

\citet{SOKOLOVA2009427} provides a comprehensive analysis of various performance measures used in classification tasks, focusing on their invariance properties with respect to changes in the confusion matrix. Their work introduces the concept of ``measure invariance'', which refers to a metric's ability to maintain consistent evaluations despite transformations in the underlying data distribution.

Additionally, \citet{goutte2005probabilistic} introduces a probabilistic framework that interprets precision, recall, and the F$_1$ score by using a symmetric beta distribution and Monte Carlo sampling techniques. This approach moves beyond simple point estimates by estimating the likelihood that one algorithm would outperform another based on these scores. Part of our work extends this idea by generalizing precision and recall to a broader set of metrics, grouped as \emph{Joint Ratio Metrics}.

The issue of metric unreliability in small sample sizes is often overlooked, with practitioners sometimes attributing it to the need for more data. \citet{chicco2020advantages} specifically addresses this issue along with imbalanced data by comparing MCC against the more widely used accuracy and F$_1$-score. They argue that MCC provides a more reliable assessment of classifier performance in binary tasks, particularly when there is a class imbalance because MCC incorporates all four confusion matrix categories. In contrast, accuracy and F$_1$ score can give misleadingly high values in imbalanced datasets, failing to capture poor performance in one class.

In another work, \citet{rudner2024mind} adjusts model parameters with Group-Aware Priors to improve model robustness under subpopulation shifts. \citet{aghbalou2024sharp} examines imbalanced classification and derives sharp error bounds under class imbalance. Their focus is on improving classification performance when one class is underrepresented, offering new theoretical insights on error rates when class probabilities approach zero.

Our work builds on these foundations by examining the effect of small sample sizes on a broader range of classification metrics. We demonstrate that metrics like accuracy, precision, recall, and even MCC not only exhibit increased variability but also have discrete jumps and falls (``jaggedness'') as the sample size scales, potentially leading to misleading evaluations. For instance, as \citeauthor{chicco2020advantages} present, while MCC is robust in imbalanced scenarios, it can still fluctuate in extreme cases of imbalance or small samples, an issue our Cross-Prior Smoothing (CPS) technique aims to mitigate by using prior information from reference groups. Additionally, our introduction of the MATCH Test allows for a rigorous assessment of whether observed metric scores in small groups are meaningful or merely artifacts of sampling variability.

By addressing these gaps, we contribute to the growing literature on improving the robustness and fairness of classification inference. Our approach offers both theoretical insights and practical tools to ensure more reliable evaluations across diverse groups, particularly when sample sizes are limited or imbalanced.

\section{DISTRIBUTION SHIFTS}
\begin{figure}[t]
    \centering
    \includegraphics[width=\linewidth]{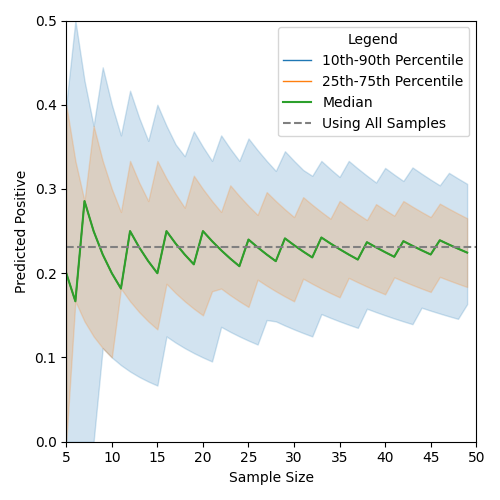}
    \caption{Variability of Positive Predictive Rate for Wealth Classification Among Multiracial Individuals using the \hyperref[sec:folktablesIncomeData]{Folktables' Income Dataset}: A Monte Carlo Simulation Study Based on Sample Size.}
    \label{fig:pprMultiracial}
\end{figure}

The variability and jaggedness in the predicted positive rates, as illustrated in Figure~\ref{fig:pprMultiracial}, is a direct consequence of both the probabilistic nature of small sample sizes and the combinatorial properties of confusion matrices with scaling sample sizes. This section explains the underlying phenomena contributing to these effects through a probabilistic lens.

Let us begin by considering the structure of the confusion matrix CM in Definition~\ref{def:cm}.

\begin{definition}\label{def:cm}
    The binary confusion matrix, CM, is a $2\times2$ matrix of the non-negative counts of true positives (TP), false negatives (FN), false positives (FP), and true negatives (TN). Each of these counts is referred to as a \emph{cell} $c$.
    \begin{align}
    \mathrm{CM} \triangleq \begin{bmatrix} TP & FN \\ FP & TN \end{bmatrix}
    \end{align}
\end{definition}
The total number of samples $n$ is partitioned among these four categories, such that $n$=TP+FN+FP+TN. For a given sample size $n$, we denote the set of all possible confusion matrices as $\mathcal{M}(n)$.

The narrowing of the distribution observed in Figure~\ref{fig:pprMultiracial} can be attributed to the combinatorial expansion of $\mathcal{M}(n)$ with increasing $n$. For small sample sizes, the number of possible confusion matrices is limited, causing each configuration (i.e., unique confusion matrix) to represent a significant fraction of the total space---including those yielding extreme metric values. As $n$ grows, the number of possible confusion matrices increases rapidly, and $\mathcal{M}(n)$ becomes densely populated. Specifically, the cardinality of $\mathcal{M}(n)$ is given by $\vert \mathcal{M}(n) \vert = \binom{n+3}{3}$ using the stars and bars method (Theorem~\ref{thm:totalOrdinalCount}). Hence, the number of possible confusion matrices grows cubically with $n$.

These configurations are influenced by the underlying probability distribution of classification outcomes, denoted as $(p_{\mathrm{TP}}, p_{\mathrm{FN}}, p_{\mathrm{FP}}, p_{\mathrm{TN}})$. Using $k_i$ as the integer count of $i$, we use the multinomial distribution definition to define the probability of any CM configuration:
\begin{align}\label{eq:multinomialPmf}
    P(CM) = \frac{n!}{k_\mathrm{TP}! \ k_\mathrm{FP}! \ k_\mathrm{FN}! \ k_\mathrm{TN}!} \ p_{\mathrm{TP}}^{k_\mathrm{TP}} \ p_{\mathrm{FP}}^{k_\mathrm{FP}} \ p_{\mathrm{FN}}^{k_\mathrm{FN}} \ p_{\mathrm{TN}}^{k_\mathrm{TN}}.
\end{align}
\begin{assumption}
    Using the multinomial distribution assumes that 1) the data is sampled i.i.d., and 2) that the model's classifications are independent.
\end{assumption}

Due to the factorial terms in the multinomial coefficient from Equation~\ref{eq:multinomialPmf}, even a single-unit increase in $n$ can significantly disrupt the probability mass distribution for small sample sizes, as depicted in Figure~\ref{fig:pprMultiracial}.

The distribution shifts in Figure~\ref{fig:metricCdfs} are not a probabilistic artifact as \textit{all} possible configurations are considered. As $n$ increases, the discrete space of possible confusion matrices becomes more finely granulated, and the number of configurations corresponding to certain metric scores increases at different rates.

\begin{figure}[t]\centering\includegraphics[width=\linewidth]{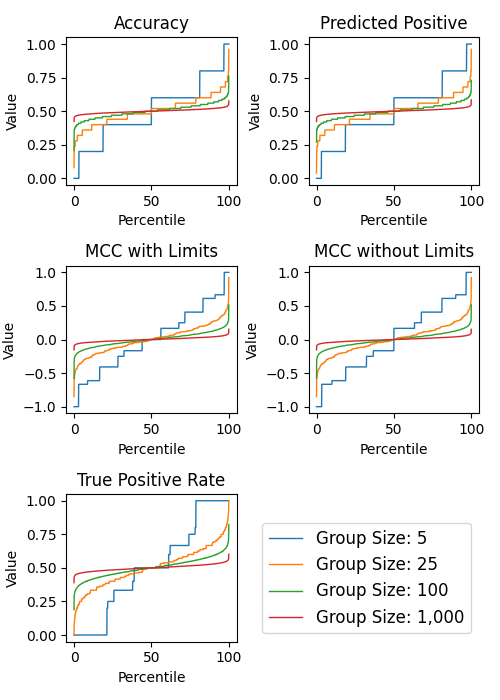}
    \caption{Distribution Shift in Classification: Analyzing Metric Sensitivity to Sample Size with Empirical Cumulative Distribution Functions.}
    \label{fig:metricCdfs}
\end{figure}

As the sample size increases, the probability masses of metric scores tend to converge to their expected values. In Figure~\ref{fig:pprMultiracial}, the expected value of the predicted positive rate converges to $1365/5956 \approx 0.23$. For any metric $M$, the expected value is $\mathbb{E}[M(p_{\mathrm{TP}}, p_{\mathrm{FN}}, p_{\mathrm{FP}}, p_{\mathrm{TN}})]$.

For example, Figure~\ref{fig:metricCdfs} uses all configurations, so all probabilities are 0.25. We see that accuracy and predicted positive converge to (0.25+0.25)/1=0.5; MCC (with or without limits) converges to $\frac{0.25\cdot 0.25-0.25\cdot0.25}{\sqrt{(0.25+0.25)\cdot(0.25+0.25)\cdot(0.25+0.25)\cdot(0.25+0.25)}}=0$; and TPR goes to 0.25/(0.25+0.25)=0.5.

By understanding 
discrete distribution shifts, we highlight the importance of considering sample size effects when interpreting classification metrics, particularly 
where small groups are compared. Ignoring these effects can lead to misleading conclusions about 
performance and fairness across different populations.

\section{UNDEFINED CASES}\label{sec:edgeCases}
            
\begin{table*}[h]
    \centering
    \caption{\label{tab:holeCounts}Undefined Cases Across Metrics Compared to Total Configuration Counts for all $n \geq 3$.}
    \rowcolors{2}{lightgray}{white}
    \renewcommand{\arraystretch}{1.25}
    \begin{tabular}{l c r}
        \toprule
        \textbf{Metric} & \textbf{Asymptotic Growth} & \multicolumn{1}{c}{\textbf{Undefined Case Count}} \\
        \midrule
        Binomial Metrics (ACC, PREV, $\dots$) & $\Theta(1)$ & 0\\
        Marginal Benefit & $\Theta(1)$ & 0\\
        Objective Fairness Index & $\Theta(1)$ & 0\\
        Simplified F$_1$ Score & $\Theta(1)$ & 1\\
        Joint Ratio Metrics (TPR, FPR, $\dots$) & $\Theta(n)$ & $n + 1$\\
        Matthews Correlation Coefficient & $\Theta(n)$ & $4n$\\
        \addlinespace
        Prevalence Threshold & $\Omega(n)$ \textbf{\&} $\mathcal{O}(n \log\log n)$ & $\geq 2n+2 \text{ and } < e^\gamma n\log \log n + \frac{0.6483n}{\log \log n}$\\
        \addlinespace
        Treatment Equality & $\Theta(n^2)$ & $\binom{n_1+2}{2}+\binom{n_2+2}{2} - 1$\\
        Original F$_1$ Score & $\Theta(n^2)$ & $\binom{n+2}{2}$\\
        \midrule
        Unique Confusion-Matrices Count & $\Theta(n^3)$ & Count of All Possible CMs: $\binom{n+3}{3}$\\
        \bottomrule
    \end{tabular}
\end{table*}

Classification metrics derived from confusion matrices can exhibit undefined behaviors, or ``holes", under certain conditions—particularly when denominators in their formulas become zero. In Table~\ref{tab:holeCounts}, we find that for all considered metrics, the asymptotic growth of undefined cases is less than the cubic growth of total possible configurations. We provide the proofs in Section~\ref{sec:holes}.

We generalize metrics of the form $M = \frac{c_i+c_j}{n}$ as \emph{Binomial Metrics}, which include common measures like accuracy, prevalence, and predicted positive rate. Furthermore, we categorize metrics with form $M = \frac{c_i}{c_i + c_j}$, as \emph{Joint Ratio Metrics} (JRMs), which include common measures like true positive rate (TPR), false positive rate (FPR), precision (PPV), and others.

We also consider other metrics, including the new \emph{Marginal Benefit}, which quantifies the net benefit or cost to a group \citep{briscoe2024facets}. Section~\ref{sec:metricDef} details all 21 metrics considered.

\section{MATCH TEST}\label{sect:matchTest}
The inherent variability and jaggedness in classification metrics complicates the assessment of model performance across groups of different sizes. Small variations in the sample size $n$ can lead to disproportionately large changes in metric values, making direct comparisons potentially misleading. This issue is particularly acute in fairness evaluations, where comparing metrics across groups (e.g., different demographic groups) is essential.

To address this challenge, we propose a \emph{Metric Alignment Trial for Checking Homogeneity (MATCH) Test} that quantifies the likelihood of observing a given metric score under a reference distribution. Specifically, we assess whether an observed metric score, $M(\mathrm{CM}_i)$, computed from a confusion matrix $\mathrm{CM}_i$ for group $i$, is consistent with what would be expected if this group followed the same performance distribution as a reference group. By comparing the observed score from group $i$ against the cumulative distribution function (CDF) derived from the reference group, we determine the percentile rank of the observed score within this distribution. This method allows us to evaluate the statistical significance of observed metrics, facilitating more informed interpretations when sample size variations could skew direct comparisons.

In this section, we focus on several 
classification metrics: the six Binomial Metrics, Marginal Benefit, and the eight Joint-Ratio Metrics. For each metric, we derive methods to compute the cumulative probability of an observed score $S_{\mathrm{obs}}$ under the assumption that it is drawn from the reference distribution.

To prepare, we now define several terms. Let $k_i\geq 0$ denote the integer count of $c_i$, $p_i$ is the probability of $c_i$, $q_i=1-p_i$ is the probability of not $c_i$, $S_{\mathrm{obs}}$ is the observed score of a metric $M$, and $P(S\leq S_{\mathrm{obs}})\in[0,1]$ is the probability that $S \leq S_{\mathrm{obs}}$. The subscript $i+j$ means $i$ or $j$.

\subsection{Binomial Metrics}
We find distributions for accuracy, prevalence, and predicted positive which have the form $(c_i+c_j)/n$.

\begin{theorem}
    For Binomial Metrics, the cumulative probability is given by the binomial cumulative distribution function (CDF):
    \begin{align}\label{eq:binomCdf}
        P(S \leq S_{\mathrm{obs}}) = \sum_{f=0}^{k_{i+j}} \binom{n}{f}p^fq^{n-f}
    \end{align}
\end{theorem}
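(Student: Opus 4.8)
The plan is to reduce the statement to the standard fact that a sum of a subset of multinomial cell counts is itself binomially distributed. First I would observe that every Binomial Metric has the form $M = (c_i+c_j)/n$ with $n$ fixed, so $M$ is a strictly increasing affine function of the single integer $k_{i+j}\triangleq k_i+k_j\in\{0,1,\dots,n\}$. In particular the observed score $S_{\mathrm{obs}}$ corresponds to exactly one admissible count, namely $nS_{\mathrm{obs}}=k_{i+j}$, and the event $\{S\le S_{\mathrm{obs}}\}$ coincides with $\{k_{i+j}\le nS_{\mathrm{obs}}\}$. This step makes precise that the relevant random object is the scalar $k_{i+j}$, not the full matrix.

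Next I would establish that $k_{i+j}\sim\mathrm{Binomial}(n,p)$ with $p=p_i+p_j$ and $q=1-p$. Two routes are available. The probabilistic route uses the i.i.d. sampling assumption: each of the $n$ samples independently lands in category $i$ or $j$ with probability $p_i+p_j$ and otherwise with probability $1-p_i-p_j$, so collapsing the four multinomial categories into ``$i$ or $j$'' versus ``neither'' turns each sample into a $\mathrm{Bernoulli}(p)$ trial, whence $k_{i+j}\sim\mathrm{Binomial}(n,p)$. The algebraic route starts from the multinomial PMF in Equation~\ref{eq:multinomialPmf} and sums over all confusion matrices in $\mathcal{M}(n)$ with $k_i+k_j$ held fixed; grouping the two summed cells and the two remaining cells and applying the binomial theorem twice collapses the multinomial coefficient and the probability factors to $\binom{n}{k_{i+j}}p^{k_{i+j}}q^{n-k_{i+j}}$. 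I would present the probabilistic argument as the main line and note the algebraic computation as a cross-check.

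Combining the two observations gives $P(S\le S_{\mathrm{obs}})=P(k_{i+j}\le nS_{\mathrm{obs}})=\sum_{f=0}^{k_{i+j}}\binom{n}{f}p^fq^{n-f}$, which is exactly Equation~\ref{eq:binomCdf}; the same computation applies verbatim to each concrete member of the family (accuracy with the merged cells $\mathrm{TP},\mathrm{TN}$; prevalence with $\mathrm{TP},\mathrm{FN}$; predicted positive with $\mathrm{TP},\mathrm{FP}$), only the identity of the merged cells and hence the value of $p$ changing. The only place requiring genuine care is the marginalization step — justifying the category merge either through the sampling model or by carrying out the double binomial-theorem summation so that no configurations are double-counted or omitted; the remainder is bookkeeping, since monotonicity of an affine map in $k_{i+j}$ is immediate.
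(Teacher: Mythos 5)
Your proposal is correct and follows essentially the same route as the paper's (much terser) proof: both rest on the observation that collapsing the multinomial cells into ``$i$ or $j$'' versus ``neither'' makes $k_{i+j}$ a $\mathrm{Binomial}(n,p)$ count, from which the CDF formula is immediate. Your added details --- the monotone reduction of the metric to the scalar $k_{i+j}$ and the algebraic cross-check via the multinomial PMF --- are sound elaborations of the same argument rather than a different approach.
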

\begin{proof}
The total count $k_{i+j}$ corresponds to the number of successes in $n$ trials, each with success probability $p$. Thus, $k_{i+j}$ follows a binomial distribution with parameters $n$ and $p$.
\end{proof}

When $np \geq 5$ and $nq \geq 5$, we can approximate $P(S \leq S_{\mathrm{obs}})$ using the normal distribution due to the Central Limit Theorem \citep{ye2024probability, freund2014mathematical}. This is done in constant time. By definition, the mean $\mu=np_{i+j}$ and standard deviation $\sigma=\sqrt{np_{i+j}q_{i+j}}$.
\begin{align}
    \text{Standardize: } z = \frac{k_{i+j} + 0.5 - \mu}{\sigma}
\end{align}
The $+0.5$ is the continuity correction as we approximate this discrete distribution with a continuous one. Then, we use the standard normal CDF with $z$: $\Phi(z)$. Furthermore, by the Berry-Esseen Theorem, the error is within $\mathcal{O}\left(n^{-1/2}\right)$ \citep{Berry1941TheAO,esseen1942liapounoff}.

For smaller $n$, exact computation using the binomial CDF is feasible in $\mathcal{O}(n^2)$ time. Otherwise, if the approximation criteria do not hold, one can use efficient approximations such as the Lanczos approximation for $\mathcal{O}(n\log n)$ time \citep{lanczosApprox}.

\textbf{Example.} Consider $n$=100, $p$=0.75, and observed score $S_{\mathrm{obs}}$=0.80. Then $k_{i+j}=nS_{\mathrm{obs}}=80$, $\mu=75$, $\sigma\approx4.33$, and $z\approx1.27$. Thus, $P(S \leq 0.80) \approx \Phi(1.27) \approx 0.90$, so the observed score is higher than approximately 90\% of the expected outcomes under the reference distribution. 

\subsection{Marginal Benefit}
The marginal benefit is defined as $\mathcal{B}$=(FP$-$FN)/$n$, and is designed to measure the net benefit or cost to a group \citep{briscoe2024facets}. Unlike Binomial Metrics, $\mathcal{B}$ involves the difference of counts, making its distribution symmetric around zero when $p_{FP} = p_{FN}$. We provide the CDF for $\mathcal{B}$ in Theorem~\ref{thm:bCdf}, and a theorem for approximating it using the normal distribution in Theorem~\ref{thm:bNormal}.

\begin{theorem}\label{thm:bNormal}
    The cumulative probability for $\mathcal{B}$ can be approximated using the normal distribution:
    \begin{align}
        F(S \leq S_{\mathrm{obs}}) \approx \Phi\left( \frac{k_{FP}-k_{FN} - (p_+-p_-)}{\sqrt{(p_++p_-)-(p_+-p_-)^2}} \right)
    \end{align}
\end{theorem}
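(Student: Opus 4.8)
The plan is to reduce $\mathcal{B}$ to a normalized sum of i.i.d.\ bounded random variables and then invoke the Central Limit Theorem, exactly mirroring the argument used above for the Binomial Metrics. First I would write $FP - FN = \sum_{\ell=1}^{n} X_\ell$, where $X_\ell$ encodes the classification outcome of the $\ell$-th sample: $X_\ell = +1$ if that sample is a false positive, $X_\ell = -1$ if it is a false negative, and $X_\ell = 0$ otherwise. Under the multinomial model (i.i.d.\ sampling and independent classifications), the $X_\ell$ are i.i.d.\ with $P(X_\ell = 1) = p_{FP} = p_+$, $P(X_\ell = -1) = p_{FN} = p_-$, and $P(X_\ell = 0) = 1 - p_+ - p_-$. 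This is the standard device for handling a \emph{difference} of multinomial cell counts, which is precisely what separates $\mathcal{B}$ from the Binomial Metrics and forces a non-binomial treatment.

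Next I would compute the first two moments, $\mathbb{E}[X_\ell] = p_+ - p_-$ and $\mathbb{E}[X_\ell^2] = p_+ + p_-$, so that $\mathrm{Var}(X_\ell) = (p_+ + p_-) - (p_+ - p_-)^2$ --- the expression appearing under the square root in the theorem. Since $|X_\ell| \le 1$, the Lindeberg--L\'evy CLT applies, so $FP - FN$ is asymptotically normal with mean $n(p_+ - p_-)$ and variance $n\big[(p_+ + p_-) - (p_+ - p_-)^2\big]$, and dividing by $n$ gives the asymptotic law of $\mathcal{B}$. Standardizing $k_{FP} - k_{FN}$ by this mean and standard deviation, adding the $\pm 0.5$ continuity correction as in the Binomial case (legitimate because $FP - FN$ lives on a lattice of span $1$, consecutive integer values being attainable), and evaluating the standard normal CDF $\Phi$ then yields the displayed formula. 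A Berry--Esseen bound gives an $\mathcal{O}(n^{-1/2})$ error, since $\mathbb{E}|X_\ell - \mathbb{E}X_\ell|^3$ is finite and uniformly bounded \citep{Berry1941TheAO,esseen1942liapounoff}.

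The CLT step is routine; the main obstacle is stating the regime in which the approximation should be trusted and writing the normalization cleanly. The variance $(p_+ + p_-) - (p_+ - p_-)^2$ degenerates when $p_+ = p_- = 0$ or when one cell carries essentially all the mass, and the approximation deteriorates for small $n$ or strongly skewed $(p_+, p_-)$; one therefore needs a $\mathcal{B}$-specific analogue of the $np \ge 5$, $nq \ge 5$ heuristic, e.g.\ a lower bound on $n \cdot \mathrm{Var}(X_\ell)$ paired with a skewness condition. It is also worth cross-checking the mean/variance normalization against the exact CDF of Theorem~\ref{thm:bCdf}, from which this result can alternatively be recovered as a direct normal limit, so that the factors of $n$ stay consistent between the two statements.
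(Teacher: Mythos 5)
Your proposal follows essentially the same route as the paper's proof: the identical decomposition of $FP-FN$ into i.i.d.\ random variables taking values in $\{-1,0,+1\}$ with probabilities $p_+$, $p_-$, $p_0$, the same computation of $\mathbb{E}[X_\ell]=p_+-p_-$ and $\mathrm{Var}(X_\ell)=(p_++p_-)-(p_+-p_-)^2$, followed by the CLT, standardization, and a Berry--Esseen error bound. Your closing remark about checking that the factors of $n$ in the mean and standard deviation stay consistent with the exact CDF of Theorem~\ref{thm:bCdf} is a fair and worthwhile caution, since the displayed formula standardizes $k_{FP}-k_{FN}$ by the single-variable moments rather than by $n(p_+-p_-)$ and $\sqrt{n}\,\sigma$.
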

\begin{proof}
 $\mathcal{B}$ can be expressed as the average of $n$ i.i.d. random variables $X_i$: $\frac1n \sum_{i=1}^{n}X_i$. For
    \begin{align}
        X_i = \begin{cases}
            +1 & \text{with probability }p_+=p_{FP}\\
            -1 & \text{with probability }p_-=p_{FN}\\
            0 & \text{with probability }p_0 = p_{TN+TP}.
        \end{cases}
    \end{align}
    Now we calculate the moments of $\mathcal{B}$.
    \begin{align}
        \mu = \mathbb{E}[X_i]=p_{+}-p_{-}+0\cdot p_0=p_{+}-p_{-}
    \end{align}
    For $\sigma^2$, we use the expanded definition: $\sigma^2$=$\mathbb{E}[X_i^2]$-$\mu^2$. 
    \begin{align}
    \mathbb{E}[X_i^2]=(+1)^2p_{+}+(-1)^2p_{-}+(0)^2p_0\\
    \text{Thus, }\sigma^2=(p_++p_-)-(p_+-p_-)^2.
    \end{align}
    Then we calculate the z-score and use the normal CDF $\Phi$. Note that the continuity correction $\delta$ cancels out: 
  $  z = \left((k_{FP} + \delta) - (k_{FN} + \delta) - \mu\right)/\sigma$.
\end{proof}
The Berry-Esseen Theorem also guarantees that this error is within $\mathcal{O}(n^{-1/2})$ \citep{Berry1941TheAO,esseen1942liapounoff}.
\subsection{Joint Ratio Metrics}
Joint Ratio Metrics (JRMs) are expressed as $c_i$/($c_i$+$c_j$). Computing the cumulative probability for JRMs presents additional challenges due to the ratio of random variables involved. Theorem \ref{thm:jrmDist} extends the work of \citeauthor{goutte2005probabilistic} by introducing a generalized framework for these metrics, along with an analysis of computational complexity. For simplicity in the following discussion, we define $k_{i+j}$=$k$ and $p_{i+j}$=$p$.
\begin{theorem}\label{thm:jrmDist}
The cumulative probability for a JRM score $S_{\mathrm{obs}}$ is given by:
    \begin{align}
P(S \leq S_{\mathrm{obs}}) = \sum_{k=1}^n P(c_{i+j}=k) P\left(\frac{k_i}{k} \leq S_{\mathrm{obs}} \bigg| c_{i+j}=k \right)\notag\\
 = \sum_{k=1}^n \left(  \binom{n}{k}p^{k}(1-p)^{n-k}\cdot \sum_{k_i=0} ^ {k_i^{\max}}
    \binom{k}{k_i}\theta^{k_i} (1-\theta)^{k -k_i} \right)
    \end{align}
\end{theorem}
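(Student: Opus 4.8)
The plan is to decompose the event $\{S \leq S_{\mathrm{obs}}\}$ by conditioning on the total count $k$ of the denominator cell $c_{i+j} = c_i + c_j$, exactly as in the first displayed line of the theorem. First I would observe that for any realized confusion matrix, the JRM score $S = c_i/(c_i+c_j)$ is undefined precisely when $c_i + c_j = 0$, which is why the outer sum runs from $k=1$ rather than $k=0$; I would note this explicitly so the conditional probabilities are well-defined. Given $c_{i+j} = k \geq 1$, the score becomes $S = k_i/k$, so the event $\{S \leq S_{\mathrm{obs}}\}$ is equivalent to $\{k_i \leq k \, S_{\mathrm{obs}}\}$, i.e. $\{k_i \leq k_i^{\max}\}$ where $k_i^{\max} = \lfloor k\, S_{\mathrm{obs}} \rfloor$.

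Next I would justify each of the two factors appearing inside the sum. For $P(c_{i+j} = k)$: the four cell counts follow the multinomial distribution of Equation~\ref{eq:multinomialPmf}, and merging the two categories $i$ and $j$ into one (``$i$-or-$j$'') versus ``everything else'' collapses the multinomial to a binomial with success probability $p = p_{i+j} = p_i + p_j$ over $n$ trials, giving $\binom{n}{k} p^k (1-p)^{n-k}$. This is the standard category-aggregation property of the multinomial. For the conditional factor $P(k_i/k \leq S_{\mathrm{obs}} \mid c_{i+j}=k)$: conditioned on exactly $k$ of the $n$ trials landing in the merged category $\{i,j\}$, each such trial independently lands in category $i$ with probability $\theta = p_i/(p_i + p_j)$ and in category $j$ with probability $1-\theta$, so $k_i \mid c_{i+j}=k \sim \mathrm{Binomial}(k, \theta)$. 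Hence $P(k_i \leq k_i^{\max} \mid c_{i+j}=k) = \sum_{k_i=0}^{k_i^{\max}} \binom{k}{k_i} \theta^{k_i}(1-\theta)^{k-k_i}$, which is the inner sum. Substituting both expressions into the law of total probability yields the claimed closed form.

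The step I expect to require the most care is the conditional-independence claim that $k_i \mid c_{i+j} = k$ is $\mathrm{Binomial}(k,\theta)$ with the \emph{renormalized} probability $\theta = p_i/(p_i+p_j)$. The clean way to see this is to group the $n$ i.i.d. trials into those that fell in $\{i,j\}$ (there are $k$ of them, by conditioning) and those that did not; by the i.i.d.\ assumption (the stated Assumption), the labels within the former group are themselves i.i.d., and for each such trial the conditional probability of outcome $i$ given outcome in $\{i,j\}$ is $p_i/(p_i+p_j)$ by the definition of conditional probability. A small subtlety worth flagging is the upper limit $k_i^{\max}$: one should confirm that $k_i^{\max} = \lfloor k S_{\mathrm{obs}} \rfloor$ correctly captures $\{k_i/k \leq S_{\mathrm{obs}}\}$ including the boundary case of equality, and that when $S_{\mathrm{obs}} \geq 1$ the inner sum correctly collapses to $1$ (summing the full binomial pmf). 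Finally I would remark that the complexity claim promised in the theorem statement follows by counting operations: the outer sum has $n$ terms, and evaluating each inner partial binomial sum costs $\mathcal{O}(n)$ with incremental pmf updates, giving an $\mathcal{O}(n^2)$ exact computation.
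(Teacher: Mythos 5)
Your proposal is correct and follows essentially the same route as the paper: condition on $c_{i+j}=k$, note the outer binomial $\mathrm{Binomial}(n,p_{i+j})$ by category aggregation, and use the inner conditional binomial $k_i \mid c_{i+j}=k \sim \mathrm{Binomial}(k,\theta)$ with $\theta = p_i/p_{i+j}$ and $k_i^{\max} = \lfloor k\,S_{\mathrm{obs}}\rfloor$. The only (cosmetic) difference is that you justify $\theta$ by the trial-level conditional-probability argument, whereas the paper invokes Bayes' rule with $P(c_{i+j}\mid c_i)=1$; your version is, if anything, the cleaner justification of the same fact.
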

\emph{Proof Sketch.} First, $c_{i+j}$ follows a binomial distribution with parameters $n$ and $p$. Given $c_{i+j}$=$ k$, $c_i$ follows a binomial distribution with parameters $k$ and $\theta$=$\frac{p_i}{p}$. Since $c_i$ $\subseteq$ $ c_{i+j}$, $P(c_{i+j}\vert c_i)$=1. Then, by Bayes' Rule, $\theta=P(c_i|c_{i+j})=1\cdot p_i/p$. See Section~\ref{sect:jrmCdf} for a full proof.

Computing this sum directly has computational complexity $\mathcal{O}(n^2)$. However, we can improve efficiency by approximating the distribution of the JRM using the beta distribution. When applying a Bayesian approach with a uniform prior (equivalent to adding pseudo-counts $\lambda = 1$), the posterior distribution of the JRM is a beta distribution with parameters $k_i$+$\lambda$ and $k_j$+$\lambda$.

Using the beta distribution, the cumulative probability is approximated as
\begin{align}
    P(S \leq S_{\mathrm{obs}}) \approx I_{S_{\mathrm{obs}}}(k_i + \lambda, k - k_i + \lambda),
\end{align}
where $I$ is the regularized incomplete beta function. This improves the time complexity to $\mathcal{O}(n)$ since $I$ integrates once over an independent variable. Beta tables allow constant time if they are available. Otherwise, efficient continued fraction and recursive approximations, or quadrature approximations of integrals may allow further improvements as the integrand is well-behaved.

\section{CROSS-PRIOR SMOOTHING}
We introduce \textit{Cross-Prior Smoothing} (CPS), a model-agnostic correction method designed to enhance the reliability of classification metrics. CPS addresses the inherent variability and instability in classification metrics that arise due to sample size disparities while preserving the consistency of metric scores across different groups.

\subsection{The Method}
CPS leverages information from a reference group's confusion matrix, $CM_{\text{ref}}$, to inform and smooth the metrics derived from a target group's confusion matrix, $CM_i$. This approach is especially pertinent in fairness analysis, where both $CM_i$ and $CM_{\text{ref}}$ are generated by the same underlying algorithm but may represent different demographic groups.

Let CM$_{\text{total}}$ denote the confusion matrix derived from the entire dataset, and $CM_i\subset CM_{\text{total}}$ be the confusion matrix for group $i$. In our experiments, we define the reference confusion matrix, $CM_{\text{ref}}$, as the portion of $CM_{\text{total}}$ excluding the data from group $i$: $CM_{\text{ref}}=CM_{\text{total}}\setminus CM_i$.

\begin{assumption}\label{ass:cps}
The reference confusion matrix $CM_{\text{ref}}$ provides a sufficiently informative prior for correcting the metrics derived from $CM_i$.
\end{assumption}

As we demonstrate by our experiments, this assumption is natural in practice, particularly given that once the parameters are fixed for a given model, the multinomial parameters are simple to estimate.

We define CPS in Algorithm~\ref{alg:cps}. We normalize CM$_{\mathrm{ref}}$ to prevent it from dominating the target group that has small sample sizes. Furthermore, the choice of $\lambda$ is crucial for balancing the contribution of the reference group's data.
\begin{algorithm}[t]
\caption{Cross-Prior Smoothing (CPS)}
\label{alg:cps}
\KwIn{Confusion matrix for group $i$, CM$_i$; Normalized reference confusion matrix, $\hat{CM}_{\mathrm{ref}}$; Smoothing constant $\lambda$}
\KwOut{Smoothed confusion matrix CM$_{smooth}$}
Create the concentrations ($\alpha$) using the prior\;
\ForEach{$c\in CM_i$, $c' \in \hat{CM}_{\mathrm{ref}}$}{
  $\alpha_{c} = c + \lambda \cdot c'$\;
}
\ForEach{$c_{smooth} \in $CM$_{smooth}$}{
Normalize the posterior distribution\;
$c_{smooth} =  \alpha_{c}/\left(\sum_{\forall \alpha} \alpha \right)$\;
Scale back for size-dependent metrics\;
$c_{smooth} \gets c_{smooth} \cdot \vert$CM$_i\vert$;
}
\Return Smoothed confusion matrix CM$_{smooth}$\;
\end{algorithm}
Our algorithm is inspired by the Dirichlet distribution, which is a conjugate prior to the multinomial distribution. However, this approach extends beyond a basic Bayesian estimate with a non-informative prior. By incorporating a reference group's data, the CPS technique provides a more robust estimate, particularly when the sample size $n$ is small. This smoothing method significantly improves the stability and reliability of metric estimates by reducing the variability and jaggedness that arise in small-sample scenarios. 

\subsection{Experiments and Results}\label{sec:cpsResultsMain}
We evaluate CPS across a range of commonly used classification metrics, including the eight JRMs, the six binomial metrics, marginal benefit, MCC, F$_1$ score, and prevalence threshold. Our experiments demonstrate that CPS consistently improves the reliability of these metrics, as evidenced by reductions in mean-squared error (MSE).
For empirical validation, we conduct experiments using two fairness datasets: COMPAS and the Folktable income dataset. As datasets commonly used in the fairness literature, these are powerful examples due to their realistically divergent confusion matrices.

\textbf{COMPAS Dataset \citep{propublica2016compasCompas}}: The COMPAS dataset comprises risk assessments used in the U.S. judicial system, which have been scrutinized for potential bias \citep{angwin2016machineCompas, larson2016howCompas, propublica2016compasCompas}. We utilize the confusion matrices reported in \citeauthor{propublica2016compasCompas} to evaluate the effectiveness of CPS in a real-world fairness context.

\textbf{Folktable Income Dataset \citep{folktables}}:\label{sec:folktablesIncomeData} We also employ the Folktable income dataset, training a random forest classifier to predict whether individuals earn more than \$50,000 per year based on features such as marital status, race, and education. For sample-size-induced bias assessment, we partition the data into eight racial groups and compute the corresponding confusion matrices.

For each metric, we conduct ten subgroup-to-reference-group experiments, where we downsample each group via Monte-Carlo one million times for every subgroup sample size, from 5 to 150. This results in \textbf{1.45 billion samples per metric}. Remarkably, CPS improves the MSE over the baseline in every experiment. Figure~\ref{fig:cpsMse} illustrates the reduction in MSE for MCC, FPR, TPR, and PT across all ten groups, with \textbf{95\% confidence intervals that are nearly indistinguishable}.

\begin{figure}[t]
    \centering
    \includegraphics[width=\linewidth]{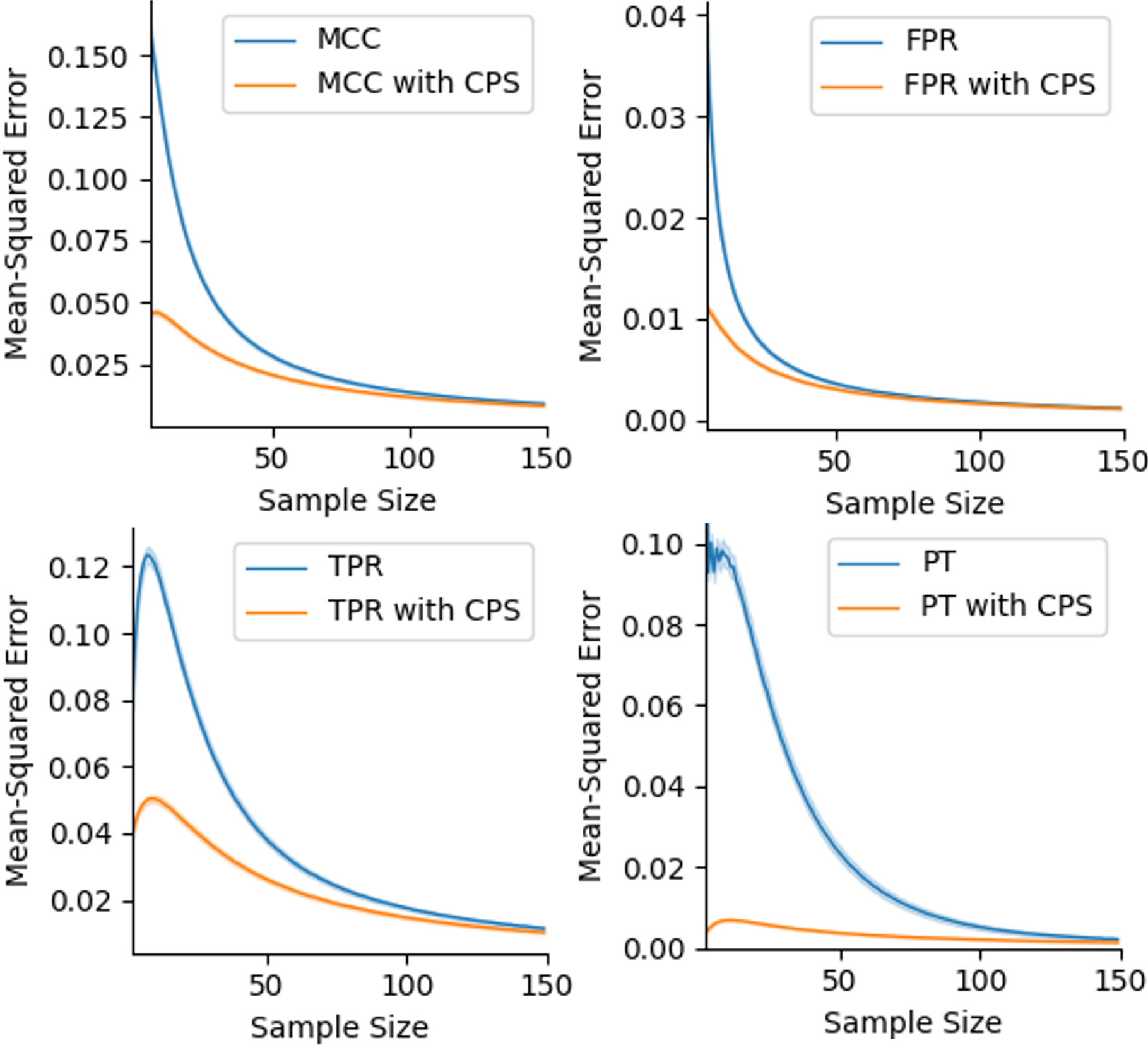}
    \caption{Sample-Size-Induced Bias Among all Groups: Impact of Cross-Prior Smoothing.}
    \label{fig:cpsMse}
\end{figure}

Our improvement in MCC is particularly encouraging, as this metric is well-recognized for its strong informational value \citep{chicco2023matthews}. When ranking metrics by the space between the performance curves, PT shows the highest improvement among the fifteen metrics analyzed, while FPR shows the least. For TPR, we notice a unique behavior, where its error rate rises steadily from n=1 until approximately n=8, before stabilizing. This is because of the small additive smoothing of $1e^{-10}$ applied to the baseline to ensure fair comparisons by minimizing undefined cases with unnoticeable impact on the score. Without the smoothing, TPR follows a monotonic, curved descent, starting at 0.035. 

An additional advantage of CPS is its ability to mitigate issues arising from undefined metric values. By incorporating information from CM$_{\text{ref}}$, CPS drastically reduces the likelihood of encountering zero counts in corresponding cells of CM$_i$, which can lead to undefined scores.

In Figure~\ref{fig:prevCps}, we focus on the prevalence metric to compare the original metric (Laplacian with $\varepsilon=0$), adding one to each cell technique (Laplacian with $\varepsilon=1$), and CPS with $\lambda\in\{5,10,20\}$. We see that CPS improves over all other techniques. We find that increasing CPS's weight ($\lambda$) giving diminishing returns. $\lambda \geq 40$ tends to not perform as well as $\lambda=10$. 

In Section~\ref{sec:cps}, we provide several more experiments. There, we show that the adding one technique is inconsistent, sometimes performing worse than the original metric, and sometimes performing better than CPS. For all fifteen metrics tested with our down-sampling technique, CPS dominates the original metric, being consistently closer to the observed, whole-group score.

\begin{figure}[t]
    \centering
    \includegraphics[width=\linewidth]{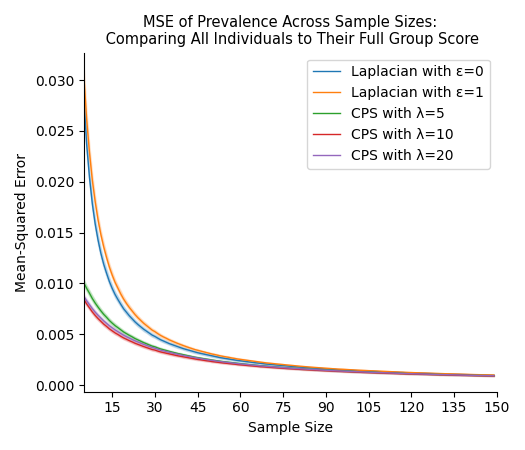}
    \caption{\label{fig:prevCps}Comparing Cross-Prior Smoothing with the original metric, and the add-one technique.}
\end{figure}

\subsection{CPS Gains with Minimal Data: Comparing Prior Assumptions}
\label{subsec:black_cm_priors}

To illustrate the flexibility of CPS in a challenging scenario, we consider a confusion matrix from the COMPAS dataset that reflects only the Black group. This group's confusion matrix is historically cited for its divergence from other racial groups and has invoked great discussion \citep{washington2019lessons,propublica2016compasCompas}. 

Assume we have only a small subset of the Black group data and need a prior to estimate the full Black group's confusion matrix for more reliable metric evaluations.  Table~\ref{tab:compas_cm} shows that while the Black group's confusion matrix does indeed differ from other racial groups, the variance is still small enough that a data-driven prior (CPS) offers a more faithful correction than assuming no prior or a uniform prior (such as the ``add-one'' prior).
By showing that CPS differs the least among all priors, we demonstrate that CPS is applicable even in this controversial case.

\begin{table}[t]
    \centering
    \caption{Evaluating Prior Assumptions for COMPAS' Black Group When Data is Sparse or Unavailable}
    \label{tab:compas_cm}
    \rowcolors{2}{lightgray}{white}
    \renewcommand{\arraystretch}{1.1}
    \begin{tabular}{lcccc}
        \toprule
        \multicolumn{1}{c}{\textbf{Category}} & \textbf{TP} & \textbf{FN} & \textbf{FP} & \textbf{TN} \\
        \midrule
        \textbf{Black (Hidden)} & 0.09 & 0.33 & 0.05 & 0.53 \\
        \textbf{All Other Races} & 0.06 & 0.25 & 0.05 & 0.64 \\
        \midrule
        \multicolumn{5}{c}{\textbf{Possible Priors for the Black Group}} \\
        \midrule
        No Prior Assumption & 0.00 & 0.00 & 0.00 & 0.00 \\
        CPS Estimate & 0.06 & 0.25 & 0.05 & 0.64 \\
        Uniform Prior & 0.25 & 0.25 & 0.25 & 0.25 \\
        \midrule
        \multicolumn{5}{c}{\textbf{Absolute Differences from the Black Group}} \\
        \midrule
        No Prior Assumption & 0.09 & 0.33 & 0.05 & 0.53 \\
        CPS Estimate & \textbf{0.03} & \textbf{0.08} & \textbf{0.00} & \textbf{0.11} \\
        Uniform Prior & 0.16 & \textbf{0.08} & 0.20 & 0.28 \\
        \bottomrule
    \end{tabular}
\end{table}

\subsection{Bias-Variance Trade-Off}
We investigate the bias-variance trade-off of CPS by analyzing the underlying multinomial distribution of CMs. The proof is provided in Section~\ref{sec:biasVar}.
\begin{theorem}\label{thm:biasVar}
    Given the Cross-Prior Smoothing (CPS) estimate $\hat{p}_c^{CPS}$, the bias and variance have the following asymptotic behavior with respect to $n$ and $\lambda$:
    \begin{align}
        \text{Bias}(\hat{p}_c^{CPS}) &\in \mathcal{O}\left(\frac{\lambda}{\lambda + n}\right), \\
        \text{Var}(\hat{p}_c^{CPS}) &\in \mathcal{O}\left(\frac{n}{(n + \lambda)^2}\right).
    \end{align}
\end{theorem}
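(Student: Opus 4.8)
The plan is to write the CPS estimate explicitly in terms of the target-group counts and the fixed (normalized) reference prior, then compute its exact mean and variance under the multinomial sampling model for $CM_i$, and finally extract the asymptotic orders in $n$ and $\lambda$. Concretely, from Algorithm~\ref{alg:cps}, the (pre-rescaling) posterior mean for cell $c$ is $\hat p_c^{CPS} = \dfrac{k_c + \lambda\,\hat c'}{n + \lambda}$, where $k_c$ is the observed count in $CM_i$ (so $\mathbb{E}[k_c] = n p_c$, $\mathrm{Var}(k_c) = n p_c(1-p_c)$ by the multinomial marginal), $n = |CM_i|$, $\hat c'$ is the corresponding normalized reference entry with $\sum_c \hat c' = 1$, and $\lambda$ is the smoothing constant. (The final line of the algorithm rescales by $|CM_i| = n$ to recover counts; whether one states the bound for the probability estimate or the rescaled count estimate only changes things by an $n$ factor, and I would state it for $\hat p_c^{CPS}$ as in the theorem.)

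First I would compute the bias. By linearity, $\mathbb{E}[\hat p_c^{CPS}] = \dfrac{n p_c + \lambda \hat c'}{n+\lambda}$, so
\begin{align}
\mathrm{Bias}(\hat p_c^{CPS}) = \mathbb{E}[\hat p_c^{CPS}] - p_c = \frac{\lambda(\hat c' - p_c)}{n+\lambda}.
\end{align}
Since $\hat c'$ and $p_c$ both lie in $[0,1]$, the factor $|\hat c' - p_c| \le 1$ is an $O(1)$ constant (independent of $n$ and $\lambda$), giving $\mathrm{Bias}(\hat p_c^{CPS}) \in O\!\left(\frac{\lambda}{\lambda+n}\right)$ as claimed. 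Next, the variance: $\hat c'$ and $\lambda$ are deterministic, so
\begin{align}
\mathrm{Var}(\hat p_c^{CPS}) = \frac{\mathrm{Var}(k_c)}{(n+\lambda)^2} = \frac{n p_c(1-p_c)}{(n+\lambda)^2} \le \frac{n}{4(n+\lambda)^2},
\end{align}
using $p_c(1-p_c) \le 1/4$; this is $O\!\left(\frac{n}{(n+\lambda)^2}\right)$ as claimed. Both displays are short exact identities followed by a trivial bound, so there is genuinely no ``grinding'' here.

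The main thing that needs care — and the only place this could be considered an obstacle — is pinning down \emph{exactly which} estimator the theorem refers to and making the constants rigorous under the stated modeling assumptions. Specifically: (i) the multinomial marginal facts $\mathbb{E}[k_c]=np_c$ and $\mathrm{Var}(k_c)=np_c(1-p_c)$ rely on Assumption~1 (i.i.d. sampling, independent classifications) together with treating $CM_i$ as a draw from $\mathcal{M}(n)$ under the multinomial PMF of Equation~\ref{eq:multinomialPmf}; (ii) $CM_{\mathrm{ref}}$ must be treated as fixed (not itself random) when conditioning — this is exactly Assumption~\ref{ass:cps}, that the reference supplies a given, informative prior — otherwise an extra variance term from $\hat c'$ would appear; and (iii) the $O(\cdot)$ statements are asymptotic in the joint regime where $n\to\infty$ and/or $\lambda\to\infty$, with the hidden constants being the $p_c$-dependent (hence bounded) quantities $|\hat c'-p_c|$ and $p_c(1-p_c)$. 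I would state these three points explicitly at the start of the proof, then present the two one-line computations above, and conclude. As a remark I would note the trade-off visible in the two bounds: increasing $\lambda$ monotonically shrinks the variance bound but inflates the bias bound, while increasing $n$ shrinks both, which is precisely the bias–variance behavior the surrounding text attributes to CPS.
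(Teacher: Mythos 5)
Your proposal matches the paper's own proof essentially line for line: both write $\hat p_c^{CPS} = (k_c + \lambda p'_c)/(n+\lambda)$, derive $\mathrm{Bias} = \frac{\lambda}{n+\lambda}(p'_c - p_c)$ and $\mathrm{Var} = \frac{np_c(1-p_c)}{(n+\lambda)^2}$ from the binomial marginal of the cell count, and read off the asymptotic orders. Your added remarks on bounding the constants and on treating the reference prior as fixed are sensible clarifications but do not change the argument.
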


\begin{corollary}
    Fixing $n$, the bias and variance of $\hat{p}_c^{CPS}$ with respect to $\lambda$ satisfy:
    \begin{align*}
        \text{Bias}(\hat{p}_c^{CPS}) \in \mathcal{O}(1), \quad
        \text{Var}(\hat{p}_c^{CPS}) \in \mathcal{O}\left(\frac{1}{\lambda^2}\right).
    \end{align*}
\end{corollary}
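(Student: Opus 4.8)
\emph{Proof proposal.} The plan is to read both bounds off Theorem~\ref{thm:biasVar} by specializing its joint $(n,\lambda)$ asymptotics to the regime in which $n$ is held fixed and $\lambda\to\infty$; in that regime the constants hidden by $\mathcal{O}(\cdot)$ are permitted to depend on $n$. First I would dispatch the bias. Theorem~\ref{thm:biasVar} gives $\mathrm{Bias}(\hat{p}_c^{CPS})\in\mathcal{O}\!\left(\lambda/(\lambda+n)\right)$, and since $0\le \lambda/(\lambda+n)\le 1$ for all $\lambda\ge 0$, $n\ge 0$, the controlling quantity is bounded above by the absolute constant $1$; hence $\mathrm{Bias}(\hat{p}_c^{CPS})\in\mathcal{O}(1)$. (Equivalently, $\hat{p}_c^{CPS}$ is a probability estimate, so its bias lies in $[-1,1]$ unconditionally. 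One may also note $\lambda/(\lambda+n)\to 1$ as $\lambda\to\infty$, so the bias does not vanish as the prior weight grows — the expected counterpart of the variance reduction.)

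Next I would handle the variance. From Theorem~\ref{thm:biasVar}, $\mathrm{Var}(\hat{p}_c^{CPS})\in\mathcal{O}\!\left(n/(n+\lambda)^2\right)$. Holding $n$ fixed and using $(n+\lambda)^2\ge\lambda^2$ gives $n/(n+\lambda)^2\le n/\lambda^2$, and since $n$ is now a constant this is $\mathcal{O}(1/\lambda^2)$. Combining the two estimates yields the corollary.

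The only point requiring care — and the one I would state explicitly up front — is the reading of the asymptotic symbols: Theorem~\ref{thm:biasVar} is a two-variable statement, whereas the corollary is a one-variable ($\lambda$) statement with $n$ frozen, so the implied constants here may depend on $n$ (indeed the variance constant is linear in $n$). Beyond this bookkeeping there is no genuine obstacle; the corollary is an immediate consequence of the theorem once the asymptotic regime is pinned down.
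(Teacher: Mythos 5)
Your proposal is correct and matches the paper's (implicit) derivation: the corollary is read directly off Theorem~\ref{thm:biasVar}, whose proof in fact yields the exact expressions $\text{Bias}=\frac{\lambda}{n+\lambda}(p'_c-p_c)$ and $\text{Var}=\frac{np_c(1-p_c)}{(n+\lambda)^2}$, from which your two bounds ($\lambda/(\lambda+n)\le 1$ and $n/(n+\lambda)^2\le n/\lambda^2$) follow immediately. Your explicit caveat that the implied constants may now depend on the frozen $n$ is a correct and worthwhile clarification, not a deviation.
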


For a given $n$, CPS effectively addresses the \emph{sample-size-induced bias} arising from the discrete, multinomial nature of confusion-matrix metrics (see Figure~\ref{fig:pprMultiracial}) by adjusting the smoothing parameter $\lambda$. Specifically, and with a constant $n$, as $\lambda$ increases, the variance of each cell decreases on the order of $\mathcal{O}(1/\lambda^2)$, while the cell estimates are biased toward $p'_c$ on the order of $\mathcal{O}(1)$. This behavior parallels well-known shrinkage approaches such as the James--Stein estimator, wherein one trades off smaller variance for additional bias. It is important to note, however, that this small-sample ``bias''—an artifact of limited data and discrete counting—does \emph{not} correspond to the classical bias term in the bias-variance trade-off for confusion-matrix cells. Instead, it reflects how some values of $n$ can systematically shift metric scores away from their population-level score. 

\subsection{Discussion}
CPS offers a practical and effective solution for correcting classification metrics in the presence of sample-size-induced bias. By using the reference group's confusion matrix as a prior, we can achieve more stable and reliable metric estimates without compromising the integrity of bias assessments. However, it is important to note that the success of CPS hinges on the validity of Assumption~\ref{ass:cps}. By experimenting with popular bias datasets, we provide supporting evidence of CPS's broad applicability. Nevertheless, we recommend following this practical guide for CPS:
\begin{enumerate}
    \item \textbf{Reasonable Prior Assumption}: CPS is most effective when the reference confusion matrix (the prior) is reasonably expected to be more informative than a uniform prior (e.g., adding one to each cell). Domain knowledge can often justify this; for instance, in the COMPAS dataset, we know that most criminals will not reoffend.
    \item \textbf{Reference Size}: The reference group should have $n\geq 100$. This is due to the reference having sample-sized-induced bias itself. Our experiments show there is significant bias for $n<50$, and the bias becomes negligible once $n \approx 100$.
    \item \textbf{Choosing $\lambda$}: Values in $[5, 10, 20]$ worked well empirically. Users can tune $\lambda$ by iterative refinement.
\end{enumerate}

Alternatively, methods such as synthetic stress tests (down-sampling and cross-validation), marginal distribution comparisons, and divergence measures (e.g., KL divergence) can support CPS' applicability to the given problem.

To evaluate CPS under divergent conditions, we analyze one of the most problematic CMs in the COMPAS dataset, whose systematic bias was publicized by ProPublica and has become a staple in the bias/fairness problems of machine learning. 

\section{CONCLUSIONS}
In this paper, we present a comprehensive analysis of sample-size-induced bias in confusion-matrix metrics, highlighting significant implications for evaluating classification performance and fairness across groups of varying sizes. Our theoretical exploration revealed that small sample sizes lead to increased variability and jaggedness in metric scores due to the discrete and combinatorial nature of confusion matrices.

To address these challenges, we propose two novel approaches. First, we introduce the \emph{MATCH Test} that assesses the statistical significance of an observed metric score relative to a reference distribution. This test enables more informed comparisons across groups by accounting for the variability introduced by different sample sizes. Second, we develop \emph{Cross-Prior Smoothing} (CPS), a method that leverages prior information from a reference group's confusion matrix to correct and stabilize metrics derived from smaller groups. Empirically, we show that CPS improves metric reliability by reducing the error rates for all 15 metrics tested.

Our findings underscore the importance of accounting for sample-size-induced bias when interpreting classification metrics, especially in fairness assessments where group comparisons are critical.

Future work could explore optimizing sample sizes to minimize metric variability and potential exploitation. Investigating the most advantageous configurations that could be misused and refining the MATCH Test by incorporating uncertainty in reference probabilities are also promising directions. Another avenue may explore behavior for different metrics; we have noticed that binomial metrics exhibit the most jaggedness. Additionally, extending the theoretical foundations of CPS and examining its applicability to a broader range of metrics and settings (such as multiclass classification) would further enhance its utility.

\paragraph{Acknowledgments}
This research was supported by the Department of Education’s Graduate Assistance in Areas of National Need (GAANN) Award Number P200A210084. 
\newpage
\bibliographystyle{apalike}
\bibliography{main}
\section*{Checklist}
 \begin{enumerate}
 \item For all models and algorithms presented, check if you include:
 \begin{enumerate}
   \item A clear description of the mathematical setting, assumptions, algorithm, and/or model. [Yes]
   \item An analysis of the properties and complexity (time, space, sample size) of any algorithm. [Yes]
   \item (Optional) Source code, with specification of all dependencies, including external libraries. [\href{https://github.com/jarrenbr/Algorithmic-Accountability-in-Small-Data}{Yes}]
 \end{enumerate}

 \item For any theoretical claim, check if you include:
 \begin{enumerate}
   \item Statements of the full set of assumptions of all theoretical results. [Yes]
   \item Complete proofs of all theoretical results. [Yes--Some are relegated to the supplementary material]
   \item Clear explanations of any assumptions. [Yes]     
 \end{enumerate}

 \item For all figures and tables that present empirical results, check if you include:
 \begin{enumerate}
   \item The code, data, and instructions needed to reproduce the main experimental results (either in the supplemental material or as a URL). [Yes]
   \item All the training details (e.g., data splits, hyperparameters, how they were chosen). [Yes]
         \item A clear definition of the specific measure or statistics and error bars (e.g., with respect to the random seed after running experiments multiple times). [Yes]
         \item A description of the computing infrastructure used. (e.g., type of GPUs, internal cluster, or cloud provider). [Not Applicable]
 \end{enumerate}

 \item If you are using existing assets (e.g., code, data, models) or curating/releasing new assets, check if you include:
 \begin{enumerate}
   \item Citations of the creator If your work uses existing assets. [Not Applicable]
   \item The license information of the assets, if applicable. [Not Applicable]
   \item New assets either in the supplemental material or as a URL, if applicable. [Not Applicable]
   \item Information about consent from data providers/curators. [Not Applicable]
   \item Discussion of sensible content if applicable, e.g., personally identifiable information or offensive content. [Not Applicable]
 \end{enumerate}

 \item If you used crowdsourcing or conducted research with human subjects, check if you include:
 \begin{enumerate}
   \item The full text of instructions given to participants and screenshots. [Not Applicable]
   \item Descriptions of potential participant risks, with links to Institutional Review Board (IRB) approvals if applicable. [Not Applicable]
   \item The estimated hourly wage paid to participants and the total amount spent on participant compensation. [Not Applicable]
 \end{enumerate}

 \end{enumerate}
\appendix
\newpage
\section{Counting Confusion Matrices}\label{sec:cmCount}
Here, we count the number of configurations (unique confusion matrices) possible given $n$ in Theorem~\ref{thm:totalOrdinalCount} and the count of how many times cell $c$ equals some value $x$ given $n$ in Theorem~\ref{thm:ordinalCount}. 

\begin{theorem}\label{thm:totalOrdinalCount}
    The cardinality of $\mathcal{M}(n)$ (the space of all possible confusion matrices given size $n$) is $\mathcal{N}(n) = \binom{n+3}{3} = (n+1)(n+2)(n+3)/6$.
\end{theorem}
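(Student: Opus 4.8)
The plan is to count the number of ordered $4$-tuples of non-negative integers $(k_{\mathrm{TP}}, k_{\mathrm{FN}}, k_{\mathrm{FP}}, k_{\mathrm{TN}})$ satisfying $k_{\mathrm{TP}} + k_{\mathrm{FN}} + k_{\mathrm{FP}} + k_{\mathrm{TN}} = n$, since each such tuple corresponds bijectively to a distinct confusion matrix in $\mathcal{M}(n)$ (the matrix is completely determined by its four cell counts, and conversely every choice of four non-negative counts summing to $n$ is realizable). This is the standard "stars and bars" setup: distributing $n$ indistinguishable units into $4$ labeled cells.

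First I would invoke the stars-and-bars bijection: arrange $n$ stars in a row and insert $3$ bars among them (including at the ends), so that the stars between consecutive bars give the four cell counts. The number of such arrangements is the number of ways to choose the positions of the $3$ bars among $n + 3$ symbols, namely $\binom{n+3}{3}$. Equivalently, the number of weak compositions of $n$ into $4$ parts is $\binom{n + 4 - 1}{4 - 1} = \binom{n+3}{3}$.

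Second I would convert the binomial coefficient to the polynomial form stated in the theorem: $\binom{n+3}{3} = \frac{(n+3)(n+2)(n+1)}{3!} = \frac{(n+1)(n+2)(n+3)}{6}$, which is a one-line expansion of the falling-factorial numerator divided by $6$.

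Honestly, there is no serious obstacle here — the only thing worth stating carefully is the bijection between confusion matrices and their cell-count tuples (so that no configuration is double-counted and none is missed), and the observation that every tuple summing to $n$ is attainable. If one wanted extra rigor, the stars-and-bars count itself can be proved by a quick induction on the number of cells or by the standard "insert separators into a sequence" argument, but for a $2\times 2$ confusion matrix the direct combinatorial statement suffices.
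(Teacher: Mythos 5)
Your proof is correct and follows essentially the same route as the paper: both identify confusion matrices with weak compositions of $n$ into $4$ parts and apply stars and bars to obtain $\binom{n+4-1}{4-1} = \binom{n+3}{3} = (n+1)(n+2)(n+3)/6$. Your version is slightly more explicit about the bijection between matrices and cell-count tuples, but the argument is the same.
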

\begin{proof}
We prove this using the stars and bars method, which is defined as:
\begin{align}\label{eq:starsBars}
    \binom{n^* + k - 1}{k - 1},
\end{align}
where $n^*$ represents the number of samples, and $k$ denotes the number of buckets. Please see \citet{feller1968} for a review of the stars and bars method.

For our case, we have $n^* = n$ samples and $k = 4$ buckets, which corresponds to $k - 1 = 3$ bars. Applying the formula, we obtain:
\begin{multline}\label{eq:N}
    \vert\mathcal{M}(n)\vert = \mathcal{N}(n) \\= \binom{n+4-1}{4-1} = \frac{(n+1)(n+2)(n+3)}{6}.
\end{multline}

\end{proof}
\begin{corollary}\label{cor:NComplexity}
    From Equation~\ref{eq:N}, the complexity of $\mathcal{N}(n)$ is $\mathcal{O}(n^3)$, and more precisely, $\Theta(n^3)$.
\end{corollary}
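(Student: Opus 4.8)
The plan is to read the result straight off the closed form $\mathcal{N}(n) = (n+1)(n+2)(n+3)/6$ established in Theorem~\ref{thm:totalOrdinalCount}: since this is a degree-three polynomial in $n$ with positive leading coefficient, it lies in $\Theta(n^3)$ by the standard comparison argument. First I would expand the product to make the leading behavior explicit, obtaining $\mathcal{N}(n) = \tfrac{1}{6}\left(n^3 + 6n^2 + 11n + 6\right)$, so that the leading coefficient is $1/6$.

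For the $\mathcal{O}(n^3)$ upper bound I would bound each factor crudely: for every integer $n \geq 1$ we have $n+1 \leq 2n$, $n+2 \leq 3n$, and $n+3 \leq 4n$, whence $\mathcal{N}(n) \leq \tfrac{1}{6}(2n)(3n)(4n) = 4n^3$. For the $\Omega(n^3)$ lower bound, each of the three factors is at least $n$, so $\mathcal{N}(n) \geq \tfrac{1}{6}n^3$. Combining these yields $\tfrac{1}{6}n^3 \leq \mathcal{N}(n) \leq 4n^3$ for all $n \geq 1$, i.e., $\mathcal{N}(n) \in \Theta(n^3)$, which in particular gives $\mathcal{N}(n) \in \mathcal{O}(n^3)$.

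I do not anticipate any genuine obstacle: the corollary is immediate once the exact cubic formula is in hand, and the only real decision is cosmetic — whether to present the explicit-constant bounds above or simply invoke the general principle that a polynomial of degree $d$ with positive leading coefficient is $\Theta(n^d)$. I would keep the short explicit-constant argument so the claim is self-contained.
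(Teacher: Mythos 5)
Your argument is correct and matches the paper's treatment: the corollary is stated without proof precisely because it follows immediately from the closed form $\mathcal{N}(n) = (n+1)(n+2)(n+3)/6$ being a degree-three polynomial with positive leading coefficient. Your explicit-constant bounds $\tfrac{1}{6}n^3 \leq \mathcal{N}(n) \leq 4n^3$ are valid for all $n \geq 1$ and simply make self-contained what the paper leaves implicit.
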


Similarly, we can count the number of times a cell $c$ equals some value $x$ given $n$ in Theorem~\ref{thm:ordinalCount}.
\begin{theorem}\label{thm:ordinalCount}
    $C(x;n) = \binom{n-x+2}{2}$, where $x \in \{0, 1, \ldots, n\}$.
\end{theorem}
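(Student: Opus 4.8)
The plan is to condition on the value of the distinguished cell and reduce the count to the stars and bars formula already invoked in Theorem~\ref{thm:totalOrdinalCount}. Fix one cell $c$ of the confusion matrix (say TP; by symmetry the particular choice is irrelevant) and impose $c = x$ for some $x \in \{0,1,\ldots,n\}$. Since the four cells are non-negative integers summing to $n$, pinning $c = x$ forces the remaining three cells to be non-negative integers summing to $n - x$, so the count $C(x;n)$ equals the number of ordered triples of non-negative integers with that sum.

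Next I would evaluate that count using Equation~\ref{eq:starsBars} with $n^* = n - x$ "samples" distributed among $k = 3$ buckets, i.e.\ $k - 1 = 2$ bars. Substituting yields
\begin{align}
    C(x;n) = \binom{(n-x) + 3 - 1}{3 - 1} = \binom{n-x+2}{2},
\end{align}
which is exactly the claimed expression. This is well defined across the stated range: at $x = n$ it gives $\binom{2}{2} = 1$ (the unique matrix with $c = n$ and the other three cells zero), and at $x = 0$ it gives $\binom{n+2}{2}$, the number of confusion matrices having a prescribed zero cell.

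There is essentially no hard step; the only point requiring care is that the three remaining cells are distinguishable, so we want ordered triples, which is precisely what stars and bars with labeled buckets counts---no overcounting correction is needed. As a consistency check, summing over all values of the fixed cell should recover Theorem~\ref{thm:totalOrdinalCount}, and indeed $\sum_{x=0}^{n} \binom{n-x+2}{2} = \sum_{m=0}^{n}\binom{m+2}{2} = \binom{n+3}{3} = \mathcal{N}(n)$ by the hockey stick identity, confirming the formula.
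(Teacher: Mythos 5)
Your argument is correct and follows the same route as the paper: fix the distinguished cell at $x$ and count the ordered non-negative triples summing to $n-x$ via stars and bars with three buckets, giving $\binom{n-x+2}{2}$. The added boundary checks and the hockey-stick consistency check against Theorem~\ref{thm:totalOrdinalCount} are nice but not different in substance from the paper's (much terser) proof.
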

\begin{proof}
    We use the stars and bars method to count the number of ways to distribute $n-x$ samples into three buckets (CM cells).
\end{proof}

\section{Metric Definitions} \label{sec:metricDef}

In this section, we present a detailed overview of the classification metrics used throughout the paper. These metrics are categorized into three groups: Binomial Metrics, Joint Ratio Metrics (JRM), and Other Metrics, including fairness and specialized measures. Each table provides the formula, abbreviation, and a brief description of the metric to clarify their definitions and usage. We use the notation from Definition~\ref{def:cm}.

\subsection*{Binomial Metrics}

Binomial metrics, such as Accuracy and Prevalence, are fundamental to assessing the overall performance of a classification model. These metrics are based on binomial distributions and provide general insights into model behavior across various datasets. Refer to Table~\ref{tab:Binomial} for a complete breakdown of these metrics.

\subsection*{Joint Ratio Metrics (JRM)}

The Joint Ratio Metrics (JRM) are metrics derived from the confusion matrix, providing insight into the relationships between different types of classification errors. These metrics include widely used measures such as True Positive Rate (TPR), False Positive Rate (FPR), and Positive Predictive Value (PPV). A comprehensive list of JRMs, their formulas, and descriptions can be found in Table~\ref{tab:JRM}.

\subsection*{Other Metrics}

In addition to the JRMs and Binomial Metrics, we also include specialized metrics such as \citeauthor{briscoe2024facets}'s \citeyearpar{briscoe2024facets} Objective Fairness Index, Matthews Correlation Coefficient \citep{matthews1975comparison}, and Prevalence Threshold. These metrics extend beyond basic performance evaluation and focus on more complex aspects, including fairness and correlation in imbalanced datasets. See Table~\ref{tab:OtherMetrics} for further details.

\begin{table*}[htbp]
    \centering
    \caption{Binomial Metrics: General performance metrics using a binomial distribution of outcomes, commonly used for basic model evaluation.}
    \label{tab:Binomial}
    \renewcommand{\arraystretch}{1.5}
    \rowcolors{2}{lightgray}{white}
    \begin{tabular}{L{3.9cm} C{2.1cm} C{2.3cm} L{7cm}}
        \toprule
        \multicolumn{1}{c}{\textbf{Metric}} & 
        \textbf{Abbreviation} & 
        \multicolumn{1}{c}{\textbf{Formula}} & 
        \multicolumn{1}{c}{\textbf{Description}} \\
        \midrule
        \addlinespace
        Binomial Metric & N/A & $ \dfrac{c_i + c_j}{n}$ & A ratio of cells $c_i$ and $c_j$ to the total count $n$.\\
        \addlinespace
        Accuracy & ACC & $ \dfrac{\mathrm{TP} + \mathrm{TN}}{n} $ & Proportion of correct predictions to all predictions. \\
        \addlinespace
        Prevalence & PREV & $ \dfrac{\mathrm{TP} + \mathrm{FN}}{n} $ & Proportion of actual positive instances in the population. (Synonym: $P$) \\
        Predicted Positive Rate & PPR & $ \dfrac{\mathrm{TP} + \mathrm{FP}}{n} $ & Fraction of instances predicted as positive by the classifier. (Synonyms: $\hat{P}$, $PP$) \\
        Inaccuracy & INACC & $\dfrac{\mathrm{FP} + \mathrm{FN}}{n}$ & Fraction of incorrect predictions out of total instances. (Synonyms: Error Rate, Misclassification Rate)\\
        Negative Prevalence & NPREV & $ \dfrac{\mathrm{TN} + \mathrm{FP}}{n} $ & Proportion of actual negative instances in the population. (Synonym: Complement of Prevalence, $N$) \\
        Predicted Negative Rate & PNR & $ \dfrac{\mathrm{TN} + \mathrm{FN}}{n} $ & Proportion of predicted negative instances in the population. (Synonyms: $\hat{N}$, $PN$)  \\
        \bottomrule
    \end{tabular}
\end{table*}

\begin{table*}[htbp]
    \centering
    \caption{Joint Ratio Metrics (JRM): Metrics that describe the relationships between confusion matrix components as ratios, commonly used to assess various types of classification performance errors.}
    \label{tab:JRM}
    \renewcommand{\arraystretch}{1.5}
    \rowcolors{2}{lightgray}{white}
    \begin{tabular}{L{4cm} C{1cm} C{2.9cm} L{7.5cm}}
        \toprule
        \multicolumn{1}{c}{\textbf{Metric}} & 
        \textbf{Abbrev.} & 
        \multicolumn{1}{c}{\textbf{Formula}} & 
        \multicolumn{1}{c}{\textbf{Description}} \\
        \midrule
        Joint Ratio Metric & JRM & $ \dfrac{c_i}{c_i + c_j} $ & A ratio between a cell $c_i$ and the sum of $c_i$ and another cell $c_j$.\\
        True Positive Rate & TPR & $ \dfrac{\mathrm{TP}}{\mathrm{TP} + \mathrm{FN}} $ & Proportion of actual positives correctly identified. (Synonyms: Sensitivity, Recall, Hit Rate) \\
        False Positive Rate & FPR & $ \dfrac{\mathrm{FP}}{\mathrm{FP} + \mathrm{TN}} $ & Proportion of actual negatives incorrectly classified as positive. (Synonym: Fall-Out) \\
        True Negative Rate & TNR & $ \dfrac{\mathrm{TN}}{\mathrm{TN} + \mathrm{FP}} $ & Proportion of actual negatives correctly identified. (Synonym: Specificity) \\
        False Negative Rate & FNR & $ \dfrac{\mathrm{FN}}{\mathrm{FN} + \mathrm{TP}} $ & Proportion of actual positives incorrectly classified as negative. (Synonym: Miss Rate) \\
        Positive Predictive Value & PPV & $ \dfrac{\mathrm{TP}}{\mathrm{TP} + \mathrm{FP}} $ & Proportion of predicted positives that are actual positives. (Synonym: Precision) \\
        Negative Predictive Value & NPV & $ \dfrac{\mathrm{TN}}{\mathrm{TN} + \mathrm{FN}} $ & Proportion of predicted negatives that are actual negatives. \\
        False Discovery Rate & FDR & $ \dfrac{\mathrm{FP}}{\mathrm{FP} + \mathrm{TP}} $ & Proportion of predicted positives that are false positives. \\
        False Omission Rate & FOR & $ \dfrac{\mathrm{FN}}{\mathrm{FN} + \mathrm{TN}} $ & Proportion of predicted negatives that are false negatives. (Synonym: False Reassurance Rate) \\
        \bottomrule
    \end{tabular}
\end{table*}

\begin{table*}[htbp]
    \centering
    \caption{Other Metrics: Specialized metrics focused on fairness, correlation, and balancing different forms of classification errors.}
    \label{tab:OtherMetrics}
    \renewcommand{\arraystretch}{1.2}
    \rowcolors{2}{lightgray}{white}
    \begin{tabular}{L{3.4cm} C{1cm} C{5.25cm} L{5.75cm}}
        \toprule
        \multicolumn{1}{c}{\textbf{Metric}} & 
        \textbf{Abbrev.} & 
        \multicolumn{1}{c}{\textbf{Formula}} & 
        \multicolumn{1}{c}{\textbf{Description}}
        \\
        \midrule
        Original F$_1$ Score & F$_1$ & $2 \div \left(\dfrac{\mathrm{TP} + \mathrm{FP}}{\mathrm{TP}} + \dfrac{\mathrm{TP} + \mathrm{FN}}{\mathrm{TP}}\right)$ & Harmonic mean of precision and recall (simplified). (Synonyms: F$_1$ Measure, Dice-S\o rensen Coefficient)\\
        Simplified F$_1$ Score & F$_1$ & $ \dfrac{2 \cdot \mathrm{TP}}{2 \cdot \mathrm{TP} + \mathrm{FP} + \mathrm{FN}} $ & Simplified version of the Original F$_1$ Score.\\
        Matthews Correlation Coefficient & MCC & \(\frac{\mathrm{TP} \cdot \mathrm{TN} - \mathrm{FP} \cdot \mathrm{FN}}{\sqrt{(\mathrm{TP} + \mathrm{FP})(\mathrm{TP} + \mathrm{FN})(\mathrm{TN} + \mathrm{FP})(\mathrm{TN} + \mathrm{FN})}} \) & Balanced metric accounting for all confusion matrix values, robust for imbalanced data. (Synonyms: Phi Coefficient, $\phi$, $r_\phi$) \\
        Prevalence Threshold & PT & $ \dfrac{\sqrt{\mathrm{TPR}\cdot \mathrm{FPR}}-\mathrm{FPR}}{\mathrm{TPR}-\mathrm{FPR}}$ & Threshold at which the positive prediction rate balances misclassification rates. \\
        Marginal Benefit & $\mathcal{B}$ & $\dfrac{\mathrm{FP} - \mathrm{FN}}{n}$ & Considers objective testing principles in law; represents the benefit gained or lost (cost) for a group.\\
        Objective Fairness Index & OFI & $\mathcal{B}_1 - \mathcal{B}_2$ & The disparity between two groups' marginal benefits. \\
        Treatment Equality & TE & $ \dfrac{\mathrm{FN}_1}{\mathrm{FP}_1} - \dfrac{\mathrm{FN}_2}{\mathrm{FP}_2} $ & Compares false negatives and false positives between two groups. \\
        \bottomrule
    \end{tabular}
\end{table*}

\section{Proofs for Undefined Cases}\label{sec:holes}
In this section, we provide details and proofs for the undefined cases in Table~\ref{tab:holeCounts} from Section~\ref{sec:edgeCases}. See Section~\ref{sec:metricDef} for metric definitions.

\begin{theorem}
    \textbf{Joint Ratio Metrics} have $n+1$ holes for each metric.
\end{theorem}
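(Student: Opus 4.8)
The plan is to count, for a fixed sample size $n$, exactly how many confusion matrices in $\mathcal{M}(n)$ produce an undefined value for a generic Joint Ratio Metric $M = c_i/(c_i+c_j)$. Since the numerator and denominator of a JRM involve only two of the four cells, the metric is undefined precisely when the denominator $c_i + c_j = 0$, i.e. when both $c_i = 0$ and $c_j = 0$ simultaneously. So the task reduces to counting the confusion matrices in which two prescribed cells are zero.

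The key step is a direct stars-and-bars count, mirroring the technique already used in Theorem~\ref{thm:totalOrdinalCount} and Theorem~\ref{thm:ordinalCount}. If $c_i = c_j = 0$, then all $n$ samples must be distributed among the remaining two cells. By stars and bars with $n$ samples and $k = 2$ buckets (hence $k-1 = 1$ bar), the number of such distributions is $\binom{n + 2 - 1}{2 - 1} = \binom{n+1}{1} = n+1$. This holds for whichever pair $\{c_i, c_j\}$ the particular JRM happens to use (e.g. $\{\mathrm{TP}, \mathrm{FN}\}$ for TPR, $\{\mathrm{FP}, \mathrm{TN}\}$ for FPR, etc.), so every JRM has exactly $n+1$ holes, matching the table entry $n+1$. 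I would also remark that for $n \geq 3$ (the regime of Table~\ref{tab:holeCounts}) this is dwarfed by the total count $\binom{n+3}{3} = \Theta(n^3)$, so the asymptotic growth is $\Theta(n)$ as claimed.

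One subtlety worth addressing explicitly is whether the two cells appearing in the denominator-sum are always the ``numerator cell plus one other'' in a way that makes the count uniform across all eight JRMs; inspecting Table~\ref{tab:JRM}, each JRM's denominator is the sum of exactly two distinct cells, so the count $n+1$ is indeed identical for all of them, and I would phrase the proof for the abstract form $c_i/(c_i+c_j)$ to cover all cases at once.

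I do not anticipate a genuine obstacle here — the argument is a one-line stars-and-bars computation once the right observation (undefined $\iff$ both relevant cells vanish) is made. If anything, the only care needed is bookkeeping: making sure ``two cells zero'' is counted as ``$n$ items into the other two cells'' rather than accidentally double-counting or allowing the third/fourth cell to also be constrained. I would present it as: fix the two zero cells, apply Equation~\ref{eq:starsBars} with $n^\ast = n$ and $k = 2$, obtain $n+1$, and note independence of the choice of JRM.
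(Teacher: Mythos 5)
Your proposal is correct and follows essentially the same route as the paper's proof: both reduce the problem to counting how the $n$ samples can be distributed over the two cells not appearing in the denominator, once you observe that a JRM is undefined exactly when $c_i + c_j = 0$. The paper enumerates these directly (choosing one free cell in $\{0,1,\dots,n\}$ determines the other), whereas you invoke the stars-and-bars formula with two buckets; these are the same count.
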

\begin{proof}
    With form $c_i/(c_i+c_j)$, these metrics are only undefined when $c_i+c_j=0$ (i.e. one row or column sums to zero). For example, consider matrices where the first row sums to zero (i.e. $TP+FN=0$). For these specific matrices, the metrics $TPR=TP/(TP+FN)$ and $FNR = FN/(FN+TP)$ are undefined. This implies that the other entries $TN$ \& $FP$ must sum to $n$. So, picking $TN$ to be any non-negative integer forces $FP=n-TN$. Since we are restricted to choosing $TN=0,1,2,\ldots,n$, there are $n+1$ options for the pair $TN$ and $FP$. As such, there are $n+1$ holes for the metrics $TPR$ and $FNR$. 

    More examples include the cases where the second row sums to zero (holes at $FPR$ \& $TNR$), the first column sums to zero (holes at $PPV$ \& $FDR$), and the second column sums to zero (holes at $NPV$ \& $FOR$). If we assume $c_k$ and $c_\ell$ are the entries in the zero row or column while $c_i$ and $c_j$ are the others, then we can count using the same restriction from above, $c_i=n-c_j$. Since there are $n+1$ options for the pair $c_i$ and $c_j$, there are $n+1$ holes for the metric undefined when $c_k+c_\ell=0$. Consequently, each JRM has $n+1$ matrices at which they will be undefined. 
\end{proof}

\begin{theorem}
The \textbf{original} $\mathbf{F_1}$\textbf{ Score} has $\binom{n+2}{2}$ holes.
\end{theorem}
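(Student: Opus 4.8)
\emph{Proof proposal.} The plan is to identify every confusion matrix at which the \emph{original} $F_1$ expression $2 \div \left(\frac{\mathrm{TP}+\mathrm{FP}}{\mathrm{TP}} + \frac{\mathrm{TP}+\mathrm{FN}}{\mathrm{TP}}\right)$ fails to evaluate, and then count them via stars and bars. First I would note that, unlike the simplified form $\frac{2\mathrm{TP}}{2\mathrm{TP}+\mathrm{FP}+\mathrm{FN}}$, the original form carries $\mathrm{TP}$ in the denominator of both inner fractions, so each inner fraction is undefined precisely when $\mathrm{TP}=0$. Hence the formula is undefined whenever $\mathrm{TP}=0$.

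Next I would check that the outer division contributes no further holes. The outer denominator is $\frac{\mathrm{TP}+\mathrm{FP}}{\mathrm{TP}} + \frac{\mathrm{TP}+\mathrm{FN}}{\mathrm{TP}} = \frac{2\mathrm{TP}+\mathrm{FP}+\mathrm{FN}}{\mathrm{TP}}$; when $\mathrm{TP}>0$ this is a ratio of nonnegative integers with a positive denominator, and it vanishes only if $2\mathrm{TP}+\mathrm{FP}+\mathrm{FN}=0$, which forces $\mathrm{TP}=0$, a contradiction. Thus no matrix with $\mathrm{TP}>0$ is a hole, and the set of holes is exactly $\{\mathrm{CM}\in\mathcal{M}(n) : \mathrm{TP}=0\}$.

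Finally I would count that slice: fixing $\mathrm{TP}=0$ leaves $n$ samples to distribute among the three remaining cells $\mathrm{FN},\mathrm{FP},\mathrm{TN}$, and by the count already recorded in Theorem~\ref{thm:ordinalCount} ($C(x;n)=\binom{n-x+2}{2}$, evaluated at $x=0$) there are $\binom{n+2}{2}$ such matrices, giving the claimed total.

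The computation is short; the only step needing care is the second one — verifying that the outer division neither introduces new undefined points nor ``repairs'' any $\mathrm{TP}=0$ matrix — so that the holes coincide exactly with the $\mathrm{TP}=0$ slice and nothing is miscounted. Everything else is an immediate application of Theorem~\ref{thm:ordinalCount}.
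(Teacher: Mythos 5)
Your proposal is correct and follows essentially the same route as the paper's proof: identify $\mathrm{TP}=0$ as the exact locus of undefinedness and count that slice via Theorem~\ref{thm:ordinalCount} with $x=0$. Your extra verification that the outer division neither adds nor removes holes is a nice bit of care that the paper's terser argument leaves implicit, but it does not change the approach.
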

\begin{proof}This score relies on the inverse of precision and recall being defined. As such, it is undefined when $\text{TP}=0$. By Theorem~\ref{thm:ordinalCount}, some CM cell (e.g., TP) has $\binom{n+2}{2}$ counts of zero over all possible CM configurations given $n$.
\end{proof}

\begin{theorem}
The \textbf{simplified} $\mathbf{F_1}$\textbf{ Score} has one hole.
\end{theorem}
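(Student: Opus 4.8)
The plan is to analyze the simplified $F_1$ score, $F_1 = \frac{2\,\mathrm{TP}}{2\,\mathrm{TP}+\mathrm{FP}+\mathrm{FN}}$, and determine exactly when its denominator vanishes. The denominator is $2\,\mathrm{TP}+\mathrm{FP}+\mathrm{FN}$, a sum of non-negative integer counts. Since all cells are non-negative, this sum equals zero if and only if each term is individually zero, i.e.\ $\mathrm{TP}=\mathrm{FP}=\mathrm{FN}=0$. First I would observe that under this condition the partition constraint $n=\mathrm{TP}+\mathrm{FN}+\mathrm{FP}+\mathrm{TN}$ forces $\mathrm{TN}=n$, which pins down a unique confusion matrix, namely $\mathrm{CM}=\begin{bmatrix}0 & 0\\ 0 & n\end{bmatrix}$.

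Next I would confirm that this is a genuine and the only undefined case for all $n\geq 3$: for any other configuration at least one of $\mathrm{TP}$, $\mathrm{FP}$, $\mathrm{FN}$ is positive, so the denominator is a positive integer and the score is well-defined (in particular, when $\mathrm{TP}>0$ the score lies in $(0,1]$, and when $\mathrm{TP}=0$ but $\mathrm{FP}+\mathrm{FN}>0$ the score equals $0$). Hence the count of holes is exactly $1$. I would also briefly contrast this with the original $F_1$ score's $\binom{n+2}{2}$ holes from the previous theorem: the simplification trades the requirement ``$\mathrm{TP}\neq 0$'' (undefined whenever $\mathrm{TP}=0$) for the much weaker requirement ``$2\,\mathrm{TP}+\mathrm{FP}+\mathrm{FN}\neq 0$'', collapsing all but one of those holes, since most matrices with $\mathrm{TP}=0$ still have $\mathrm{FP}+\mathrm{FN}>0$.

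There is essentially no obstacle here; the only point requiring a line of care is the equivalence ``sum of non-negative integers is zero $\iff$ each summand is zero,'' together with the partition identity collapsing the solution set to a single matrix. The argument is a one-paragraph counting observation, and the main thing to get right is simply stating the unique matrix $\begin{bmatrix}0 & 0\\ 0 & n\end{bmatrix}$ explicitly and noting it is valid (a legitimate element of $\mathcal{M}(n)$) for every $n\geq 3$, so the hole count is exactly one rather than zero.
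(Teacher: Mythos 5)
Your argument is correct and matches the paper's proof: both identify that the denominator $2\,\mathrm{TP}+\mathrm{FP}+\mathrm{FN}$ vanishes only when $\mathrm{TP}=\mathrm{FP}=\mathrm{FN}=0$, yielding exactly one undefined configuration. You simply spell out the unique matrix $\begin{bmatrix}0 & 0\\ 0 & n\end{bmatrix}$ more explicitly than the paper does, which is a welcome bit of added rigor but not a different approach.
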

\begin{proof}
Introduced as the harmonic mean of precision and recall, modern ML benchmarks often use the simplified version: 2TP/(2TP+FP+FN). Here, $F_1$ is only undefined when 2TP+FP+FN=0, giving 1 hole. 
\end{proof}

\begin{theorem}
The \textbf{Matthews Correlation Coefficient} has $4n$ holes.
\end{theorem}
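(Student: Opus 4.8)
The plan is to count, via inclusion--exclusion over the four marginal sums of CM, exactly those confusion matrices in $\mathcal{M}(n)$ at which the MCC denominator vanishes. The key structural observation is that the radicand factors as $(TP+FP)(TP+FN)(TN+FP)(TN+FN)$, i.e.\ the product of the two column sums and the two row sums of CM. Hence MCC is undefined precisely when at least one of these four marginals equals zero. I would introduce the four ``bad'' sets $A_1 = \{TP+FN=0\}$, $A_2 = \{FP+TN=0\}$ (the row constraints), $A_3 = \{TP+FP=0\}$, $A_4 = \{FN+TN=0\}$ (the column constraints), and the goal becomes computing $|A_1 \cup A_2 \cup A_3 \cup A_4|$.

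First I would compute each $|A_i|$. Because all cell counts are non-negative, a marginal summing to zero forces its two constituent cells to both be zero; the remaining two cells must then sum to $n$. By exactly the stars-and-bars restriction already used in the Joint Ratio Metrics count (picking one of the two free cells determines the other), this yields $n+1$ matrices, so $|A_i| = n+1$ for each $i$.

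Next I would work out the pairwise intersections. The two ``opposite-pair'' intersections $A_1 \cap A_2$ (both rows zero) and $A_3 \cap A_4$ (both columns zero) each force all four cells to zero, hence $n=0$; since $n \geq 3$, both are empty. Each of the remaining four intersections pairs a row constraint with a column constraint (e.g.\ $A_1 \cap A_3$ gives $TP=FN=FP=0$, $TN=n$), forcing three cells to zero and the fourth to equal $n$, so each contributes exactly one matrix. Every triple intersection contains either $A_1 \cap A_2$ or $A_3 \cap A_4$, so all triple intersections — and a fortiori the quadruple intersection — are empty. Assembling: $|A_1 \cup A_2 \cup A_3 \cup A_4| = 4(n+1) - (0+0+1+1+1+1) + 0 - 0 = 4n$.

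I do not anticipate a real obstacle; the one point requiring care is the bookkeeping of which pairwise intersections are empty versus singletons, together with a clean justification that the hypothesis $n \geq 3$ (in fact $n \geq 1$ suffices) eliminates the degenerate all-zero matrix that would otherwise appear in $A_1 \cap A_2$ and $A_3 \cap A_4$. It is also worth a sentence confirming that the four single-constraint families of size $n+1$ overlap only in the four singleton matrices accounted for above, which is exactly what the second inclusion--exclusion term corrects.
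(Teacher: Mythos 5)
Your proposal is correct and takes essentially the same approach as the paper: each vanishing marginal contributes $n+1$ matrices, and the only overlaps are the four matrices with a single cell equal to $n$, giving $4(n+1)-4=4n$. Your version simply makes the inclusion--exclusion bookkeeping (empty opposite-pair, triple, and quadruple intersections) explicit where the paper states the overlap count directly.
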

\begin{proof}
    Defined as $\frac{TP \times TN - FP \times FN}{\sqrt{(TP + FP)(TP + FN)(TN + FP)(TN + FN)}}$, MCC is only undefined when any term in the denominator's product is zero (e.g., TP+FN=0). From the JRM proof, there are $n$+1 configurations for $c_i$+$c_j$=0. However, there are four overlapping cases, all when some $c_i=n$. Hence, there are $4(n+1)-4=4n$ holes.
\end{proof}

\begin{theorem}
The \textbf{Prevalence Threshold} has at least $2n+2$ holes and less than $e^\gamma n\log \log n + \frac{0.6483n}{\log \log n}$ holes for all $n\geq 3$.

\emph{Note:} Since $e$ is Euler's number and $\gamma$ is the Euler-Mascheroni constant, $e^\gamma \approx 1.78$.
\end{theorem}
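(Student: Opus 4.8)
The plan is to split the holes of the prevalence threshold $\mathrm{PT}=\frac{\sqrt{\mathrm{TPR}\cdot\mathrm{FPR}}-\mathrm{FPR}}{\mathrm{TPR}-\mathrm{FPR}}$ into three types and count them. Since $\mathrm{TPR},\mathrm{FPR}\in[0,1]$ the radicand is always non-negative, so $\mathrm{PT}$ is undefined on a confusion matrix exactly when (a) $\mathrm{TPR}$ is undefined, i.e.\ $\mathrm{TP}+\mathrm{FN}=0$; (b) $\mathrm{FPR}$ is undefined, i.e.\ $\mathrm{FP}+\mathrm{TN}=0$; or (c) $\mathrm{TPR}=\mathrm{FPR}$, which makes the denominator vanish. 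For the lower bound I would count (a) and (b) exactly as in the Joint Ratio Metric proof: $\mathrm{TP}+\mathrm{FN}=0$ forces $\mathrm{TP}=\mathrm{FN}=0$ and $\mathrm{FP}+\mathrm{TN}=n$, giving $n+1$ matrices (choose $\mathrm{FP}\in\{0,\dots,n\}$); symmetrically $n+1$ matrices have $\mathrm{FP}+\mathrm{TN}=0$; and these two families are disjoint for $n\ge1$. Hence $\mathrm{PT}$ has at least $2n+2$ holes, which also yields the $\Omega(n)$ growth.

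For the upper bound, the key observation is that $\mathrm{PT}$ is undefined on $\mathrm{CM}$ precisely when $\mathrm{CM}$ is \emph{singular}, $\mathrm{TP}\cdot\mathrm{TN}=\mathrm{FP}\cdot\mathrm{FN}$: conditions (a) and (b) each force this identity, and when both row sums are positive, adding $\mathrm{TP}\cdot\mathrm{FP}$ to both sides of $\mathrm{TP}\cdot\mathrm{TN}=\mathrm{FP}\cdot\mathrm{FN}$ and factoring recovers $\frac{\mathrm{TP}}{\mathrm{TP}+\mathrm{FN}}=\frac{\mathrm{FP}}{\mathrm{FP}+\mathrm{TN}}$, i.e.\ case (c). So the total number of holes equals the number of $2\times2$ non-negative integer matrices of total weight $n$ with $\mathrm{TP}\cdot\mathrm{TN}=\mathrm{FP}\cdot\mathrm{FN}$. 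I would enumerate these by their common row ratio $r=p/q$ written in lowest terms: each row is then a non-negative integer multiple of the primitive vector $(p,\,q-p)$, forcing $q\mid n$; summing over $q\mid n$ the count of admissible numerators — $\varphi(q)$ of them (Euler's totient) when $q\ge2$, and the two values $r=0,1$ when $q=1$ — times the number of ways to split $n/q$ between the two rows gives a divisor-sum formula for the count, once the zero-row configurations are reconciled with the $2n+2$ matrices of Part~1 so nothing is double counted.

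The last step, which I expect to be the main obstacle, is to bound this divisor sum by a quantity controlled by Robin's unconditional estimate $\sigma(n)<e^{\gamma}n\log\log n+\tfrac{0.6483\,n}{\log\log n}$ (valid for all $n\ge3$, $\log$ natural, $\sigma(n)=\sum_{d\mid n}d$), and then invoke it. The subtlety is that the count emerges naturally as a $\gcd$-type sum — a Pillai-style $\sum_{k=1}^{n}\gcd(k,n)=\sum_{d\mid n}d\,\varphi(n/d)$ appears after the parametrization — so the real work is to dominate that by a constant multiple of $\sigma(n)$, or to bound it directly against $e^{\gamma}n\log\log n+\tfrac{0.6483\,n}{\log\log n}$, while carefully handling the boundary matrices with one or more zero entries; the combinatorial set-up in Parts~1--2 is routine by comparison.
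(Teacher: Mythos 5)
Your lower bound is the paper's own argument and is correct: the $n+1$ matrices with $\mathrm{TP}+\mathrm{FN}=0$ and the $n+1$ with $\mathrm{FP}+\mathrm{TN}=0$ are disjoint families of holes, giving $2n+2$. Your upper-bound \emph{setup} is also sound, and in fact more careful than the paper's: the paper likewise reduces the denominator condition to $\mathrm{TP}\cdot\mathrm{TN}=\mathrm{FP}\cdot\mathrm{FN}$, but then counts only configurations in which one column is an \emph{integer} multiple of the other (so that a divisor condition appears) before invoking Robin's bound on $\sigma(n)$; it never accounts for proportional columns with a non-integer rational ratio, e.g.\ columns $(2,4)$ and $(3,6)$ at $n=15$. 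Your primitive-vector parametrization is the correct complete enumeration of the singular matrices.

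The genuine gap is precisely the step you flag as the main obstacle, and it cannot be closed, because your enumeration yields too many holes for the stated bound. Carrying it out: rows $s(u,v)$ and $t(u,v)$ with $\gcd(u,v)=1$, $u+v=q\mid n$, $s,t\ge 1$, $s+t=n/q$ contribute $\sum_{q\mid n}\varphi(q)\left(\tfrac{n}{q}-1\right)+(n-1)$ matrices with both rows nonzero (the extra $n-1$ because $q=1$ admits two primitive vectors), and adding the $2n+2$ zero-row matrices gives exactly $\sum_{d\mid n}d\,\varphi(n/d)+2n+1$ holes. Since $\varphi(m)\ge 1$, the gcd-sum $\sum_{d\mid n}d\,\varphi(n/d)$ is at least $\sigma(n)$ (it equals $2p-1\approx 2\sigma(p)$ for primes and $n(1+k/2)$ for $n=2^k$), so it is dominated neither by a constant multiple of $\sigma(n)$ nor by $e^{\gamma}n\log\log n+0.6483n/\log\log n$, and it is not even $O(n\log\log n)$. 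Concretely, every MCC hole is a PT hole, so there are always at least $4n$ holes, yet at $n=7$ the claimed ceiling is $e^{\gamma}\cdot 7\ln\ln 7+0.6483\cdot 7/\ln\ln 7\approx 15.1$ while the true count is $13+15=28$; the same failure occurs for all $4\le n\lesssim 2800$. So the obstacle you anticipated is not a technical difficulty but a counterexample: your (correct) combinatorics refutes the stated upper bound rather than proving it, and the paper's proof only appears to succeed because it silently drops the non-integer-ratio configurations and never checks its inequality against the $4n$ degenerate matrices it concedes are holes.
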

\begin{proof}
    Defined as $\frac{\sqrt{TPR\cdot FPR}-FPR}{TPR-FPR}$, PT is undefined if TP+FN=0 (TPR), or FP+TN=0 (FPR), or TPR$-$FPR=0. TPR and FPR are JRM metrics, hence have n+1 holes each. We now move to TPR$-$FPR=TP/(TP+FN)$-$FP/(FP+TN)=0.
\begin{eqnarray}&TP/(TP+FN)-FP/(FP+TN) = 0 \\ \iff &TP(FP + TN) - FP(TP +FN) = 0 \\ \iff &TP\cdot TN - FP\cdot FN = 0\label{eq:ptIffFinal}\end{eqnarray} 
Equation~\ref{eq:ptIffFinal} only occurs if one row or column is a multiple of the other. For a matrix with a zero row or column, this is 
 the set of MCC holes described above. 
 Otherwise, there is at most one 
 multiple of that column such that the sum of the entries is $n$. For example, if the predicted-negative 
 column is an integer ($k$) multiple of the predicted-positive column, then we have $k\cdot$(TP+FN)=$n$ 
 and hence  (TP+FN)$|n$. 
Thus, we can bound the number of holes using the sum of divisor function. Then, by Robin's Theorem \citep{robin1984},
there are strictly less than $e^\gamma n\log \log n + \frac{0.6483n}{\log \log n}$ holes for $n\geq 3$. 

Furthermore, since the total holes from FPR and TPR equals $2n+2$, the lower bound is $\Omega(n)$.
\end{proof}

\begin{theorem}
The \textbf{Treatment Equality} metric has $\binom{n_1+2}{2}+\binom{n_2+2}{2} - 1$ holes.
\end{theorem}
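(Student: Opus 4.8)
The plan is to reduce the count to the cell-count identity of Theorem~\ref{thm:ordinalCount} together with an inclusion--exclusion correction. First I would determine exactly when $\mathrm{TE} = \mathrm{FN}_1/\mathrm{FP}_1 - \mathrm{FN}_2/\mathrm{FP}_2$ is undefined. Each summand is a quotient whose denominator is a single confusion-matrix cell, so $\mathrm{TE}$ fails to be defined precisely when $\mathrm{FP}_1 = 0$ or $\mathrm{FP}_2 = 0$; this already absorbs the degenerate $0/0$ cases in which additionally some $\mathrm{FN}_k = 0$. Thus the holes are exactly the configurations in $A \cup B$, where $A$ is the set of configurations with $\mathrm{FP}_1 = 0$ and $B$ the set with $\mathrm{FP}_2 = 0$.

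Next I would count $A$ and $B$ using Theorem~\ref{thm:ordinalCount}. Taking the target value $x = 0$ there, the number of confusion matrices of size $n_k$ in which the cell $\mathrm{FP}$ equals zero is $C(0; n_k) = \binom{n_k + 2}{2}$, i.e. stars and bars distributing the remaining $n_k$ samples over the three cells $\mathrm{TP}, \mathrm{FN}, \mathrm{TN}$. Hence $|A| = \binom{n_1+2}{2}$ and $|B| = \binom{n_2+2}{2}$, so inclusion--exclusion gives $|A \cup B| = \binom{n_1+2}{2} + \binom{n_2+2}{2} - |A \cap B|$, and it remains to show $|A \cap B| = 1$.

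Establishing $|A \cap B| = 1$ is the step I expect to be the main obstacle, since this is where the bookkeeping for a two-group metric departs from the single-matrix metrics handled above: one must argue that, under the accounting in force, there is exactly one configuration in which $\mathrm{FP}_1$ and $\mathrm{FP}_2$ vanish at the same time --- the fully degenerate all-true-negative configuration, in which each group has no predicted or actual positives --- and that no other configuration is double counted. Once that single overlap is pinned down, the hole count is $\binom{n_1+2}{2} + \binom{n_2+2}{2} - 1$; and since $\binom{n_k+2}{2} = \Theta(n_k^2)$, the $\Theta(n^2)$ growth recorded in Table~\ref{tab:holeCounts} follows immediately.
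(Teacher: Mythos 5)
Your proposal follows the paper's proof essentially verbatim: the paper likewise observes that TE is undefined exactly when $\mathrm{FP}_1 = 0$ or $\mathrm{FP}_2 = 0$, counts each event as $\binom{n_i+2}{2}$ via the stars-and-bars count of Theorem~\ref{thm:ordinalCount}, and subtracts one for the case $\mathrm{FP}_1 = \mathrm{FP}_2 = 0$. The overlap term you flag as the main obstacle is resolved in the paper exactly as you anticipate --- it is treated as a single double-counted case --- so your argument matches the paper's in both structure and detail.
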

\begin{proof}
    Recall that TE is defined as the difference of the ratios $\frac{FN_1}{FP_1} - \frac{FN_2}{FP_2}$ for two groups within the dataset: 1, 2. The metric is undefined when either $FP_1 = 0$ or $FP_2 = 0$. For each group, the number of ways for $FP_i = 0$ is $\binom{n_i + 2}{2}$, where $n_i$ is the sample size of group $i$. Summing over both groups and subtracting the case where both $FP_1 = FP_2 = 0$ (to avoid double-counting), the total number of holes is $\binom{n_1+2}{2}+\binom{n_2+2}{2} - 1$.
\end{proof}

\section{Marginal Benefit's Distribution}
In this section, we define marginal benefit's ($\mathcal{B}$'s) probability mass function in Theorem~\ref{thm:bPmf} and its cumulative distribution function in Theorem~\ref{thm:bCdf}. Understanding this distribution allows us to compute the exact probability of observing a given value of $\mathcal{B}$ under the reference distribution.

\begin{figure*}[htbp]
\begin{theorem}[PMF of $\mathcal{B}$]\label{thm:bPmf}
    We find the PMF of $\mathcal{B}=\frac{k}{n}$ by summing over the probabilities of all counts FP and FN such that $FP-FN=k$.
    \begin{equation}
        P\left( \mathcal{B} = \dfrac{k}{n} \right) = \sum_{n_{-1} = n_{-1}^{\min}}^{n_{-1}^{\max}} \left( 
        \underbrace{\dfrac{n!}{(k + n_{-1})! \, n_{-1}! \, (n - k - 2 n_{-1})!}}_{\text{Multinomial Coefficient}}
         \ \times \  
         \underbrace{p_{\mathrm{FP}}^{k + n_{-1}} \, p_{\mathrm{FN}}^{n_{-1}} \, p_0^{n - k - 2 n_{-1}}}_{\text{Joint Probability Term}}
         \right)
    \end{equation}
    where: $\quad
n_{-1}^{\min} = \max(0, -k), \quad
n_{-1}^{\max} = \left\lfloor \dfrac{n - k}{2} \right\rfloor, \quad$ and $k$ ranges over the integers $-n$ and $n$ since\\$k=FP-FN \in[-n,n]$.
\end{theorem}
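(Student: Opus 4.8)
\emph{Proof proposal.} The plan is to collapse the four-category multinomial over $(TP,FN,FP,TN)$ into a trinomial by lumping the two cells that do not affect $\mathcal{B}$, and then to enumerate every way the event $\{FP-FN=k\}$ can be realized, summing the resulting probabilities.

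First I would use the aggregation (lumping) property of the multinomial distribution. Since $\mathcal{B}=(FP-FN)/n$ depends on $TP$ and $TN$ only through their sum $n-FP-FN$, I sum the multinomial PMF in Equation~\ref{eq:multinomialPmf} over all pairs $(k_{\mathrm{TP}},k_{\mathrm{TN}})$ with $k_{\mathrm{TP}}+k_{\mathrm{TN}}$ fixed; applying the binomial theorem to $p_{\mathrm{TP}}^{k_{\mathrm{TP}}}p_{\mathrm{TN}}^{k_{\mathrm{TN}}}$ shows that $(FP,FN,\,TP{+}TN)$ is trinomial with parameters $n$ and $(p_{\mathrm{FP}},p_{\mathrm{FN}},p_0)$, where $p_0=p_{\mathrm{TP}}+p_{\mathrm{TN}}=p_{\mathrm{TN+TP}}$. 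Next I parametrize the target event: setting $n_{-1}=FN$, the constraint $FP-FN=k$ forces $FP=k+n_{-1}$, and the leftover mass is $TP+TN=n-FP-FN=n-k-2n_{-1}$. Substituting these three counts into the trinomial PMF produces exactly the summand in the statement, namely $\frac{n!}{(k+n_{-1})!\,n_{-1}!\,(n-k-2n_{-1})!}\;p_{\mathrm{FP}}^{k+n_{-1}}p_{\mathrm{FN}}^{n_{-1}}p_0^{n-k-2n_{-1}}$.

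Finally I would pin down the range of summation by requiring all three counts to be nonnegative integers: $n_{-1}\geq 0$, $k+n_{-1}\geq 0$, and $n-k-2n_{-1}\geq 0$, which give $n_{-1}^{\min}=\max(0,-k)$ and $n_{-1}^{\max}=\lfloor (n-k)/2\rfloor$; moreover $k=FP-FN$ ranges over the integers in $[-n,n]$. The events indexed by distinct admissible values of $n_{-1}$ are disjoint and together exhaust $\{FP-FN=k\}$, so the law of total probability yields the claimed sum. The only delicate points are making the lumping step airtight (a one-line binomial-theorem computation) and getting the bounds exactly right --- the floor in $n_{-1}^{\max}$ and the $\max(0,-k)$ in $n_{-1}^{\min}$ --- which are the usual off-by-one hazards of stars-and-bars-type enumeration rather than genuine obstacles.
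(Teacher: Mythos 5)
Your proposal is correct and follows essentially the same route as the paper's proof: both parametrize the event $\{FP-FN=k\}$ by $n_{-1}=FN$, substitute $FP=k+n_{-1}$ and $TP+TN=n-k-2n_{-1}$ into a trinomial PMF with probabilities $(p_{\mathrm{FP}},p_{\mathrm{FN}},p_0)$, and obtain the identical summation bounds from non-negativity of the three counts. The only cosmetic difference is that you derive the trinomial by explicitly lumping $TP$ and $TN$ in the four-cell multinomial via the aggregation property, whereas the paper reaches the same three-category structure directly by modeling each sample as an i.i.d.\ variable $X_i\in\{+1,-1,0\}$.
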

\end{figure*}

\begin{figure*}[htbp]
    \begin{theorem}[CDF of $\mathcal{B}$]\label{thm:bCdf}
    The cumulative distribution function of $\mathcal{B}$ is:
        \begin{equation}
            P\left( \mathcal{B} \leq b \right) = P\left( S \leq n b \right) = \sum_{k = -n}^{\lfloor n b \rfloor} \ 
            \underbrace{\sum_{n_{-1} = n_{-1}^{\min}(k)}^{n_{-1}^{\max}(k)}
            \left(
            \dfrac{n!}{(k + n_{-1})! \, n_{-1}! \, (n - k - 2 n_{-1})!} \times p_{\mathrm{FP}}^{k + n_{-1}} \, p_{\mathrm{FN}}^{n_{-1}} \, p_0^{n - k - 2 n_{-1}}
            \right)
            }_{\text{Theorem~\ref{thm:bPmf}}}
        \end{equation}
        where $b$ is a real number ranging from $-1$ to $1$ and $k$ is an integer ranging from $-n$ to $\lfloor nb \rfloor$.
    \end{theorem}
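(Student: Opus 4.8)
\emph{Proof proposal.} The plan is to obtain the CDF as the cumulative sum of the PMF established in Theorem~\ref{thm:bPmf}, taking care of the discreteness of the underlying count. First I would set $S = \mathrm{FP} - \mathrm{FN}$, so that $\mathcal{B} = S/n$. Since $\mathrm{FP}$ and $\mathrm{FN}$ are non-negative integers with $\mathrm{FP} + \mathrm{FN} \le n$, the variable $S$ takes integer values in $\{-n, -n+1, \dots, n\}$, and the events $\{S = k\}$ over these $k$ are mutually exclusive and exhaust the sample space.

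Next I would rewrite the event of interest. We have $\{\mathcal{B} \le b\} = \{S \le n b\}$, and because $S$ is integer-valued this equals $\{S \le \lfloor n b \rfloor\}$; intersecting with the support gives the disjoint union $\{\mathcal{B} \le b\} = \bigcup_{k=-n}^{\lfloor n b \rfloor} \{S = k\}$. This union is empty (probability $0$) when $\lfloor n b \rfloor < -n$ and is the whole sample space (probability $1$) when $\lfloor n b \rfloor \ge n$, which is consistent with the stated range $b \in [-1,1]$. By finite additivity, $P(\mathcal{B} \le b) = \sum_{k=-n}^{\lfloor n b \rfloor} P(S = k) = \sum_{k=-n}^{\lfloor n b \rfloor} P\!\left(\mathcal{B} = k/n\right)$.

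Finally, I would substitute the closed form for $P(\mathcal{B} = k/n)$ from Theorem~\ref{thm:bPmf}, carrying over verbatim its inner summation over $n_{-1}$ from $n_{-1}^{\min}(k) = \max(0, -k)$ to $n_{-1}^{\max}(k) = \lfloor (n-k)/2 \rfloor$ together with the multinomial coefficient and joint probability term. This yields exactly the displayed double sum, completing the proof.

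The main obstacle is not analytic — there is essentially no difficulty beyond index bookkeeping. The one point requiring care is justifying the floor in $\lfloor n b \rfloor$ by the integrality of $S$, and ensuring that the per-$k$ inner bounds $n_{-1}^{\min}(k)$ and $n_{-1}^{\max}(k)$ are inherited unchanged from Theorem~\ref{thm:bPmf}, so that no invalid confusion-matrix configuration is counted and every valid one with $\mathrm{FP} - \mathrm{FN} = k$ is included exactly once.
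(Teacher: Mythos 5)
Your proposal is correct and follows essentially the same route as the paper's proof: reduce $P(\mathcal{B}\leq b)$ to $P(S \leq nb)$ with $S = \mathrm{FP}-\mathrm{FN}$, use the integrality of $S$ to justify summing $P(S=k)$ for $k$ from $-n$ to $\lfloor nb\rfloor$, and substitute the PMF of Theorem~\ref{thm:bPmf}. Your added remarks on the boundary cases $\lfloor nb\rfloor < -n$ and $\lfloor nb\rfloor \geq n$ are a harmless elaboration beyond what the paper states.
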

    \end{figure*}

\begin{proof}[Proof of Theorem~\ref{thm:bPmf}]
    Recall that $\mathcal{B}$ can be expressed as the average of $n$ i.i.d. random variables $X_i$: $\frac1n \sum_{i=1}^{n}X_i$. For
    \begin{align}\label{eq:xiCases}
        X_i = \begin{cases}
            +1 & \text{with probability }p_+=p_{FP}\\
            -1 & \text{with probability }p_-=p_{FN}\\
            0 & \text{with probability }p_0 = p_{TN+TP}.
        \end{cases}
    \end{align}
We proceed with the proof in two parts. First, we determine the bounds, and then we derive the corresponding multinomial probability.

\subsubsection*{The Bounds of the PMF:}

Let $n_{+1}$ be the number of times $X_i = +1$ (number of false positives), $n_{-1}$ be the number of times $X_i = -1$ (number of false negatives), and $n_0$ be the number of times $X_i = 0$ (number of true positives and true negatives). Then:
\begin{align*}
n &= n_{+1} + n_{-1} + n_0 \\
S = k &= n_{+1} \cdot (+1) + n_{-1} \cdot (-1) + n_0 \cdot 0 = n_{+1} - n_{-1}
\end{align*}
\begin{equation}\label{eq:k}
\text{We highlight } k = n_{+1} - n_{-1} \text{ from above.}
\end{equation}
It follows that $n_{+1}=k+n_{-1}$. We use this to redefine $n_0$ in terms of $n$, $k$, and $n_{-1}$:
\begin{align}\label{eq:n0}
    n_0 = n-(k+n_{-1})-n_{-1} = n-k-2n_{-1}.
\end{align}

Our bounds are constrained by all cells being non-negative. As such, the counts $n_{+1}$, $n_{-1}$, and $n_0$ must satisfy:
\begin{align*}
n_{+1} &\geq 0 \implies k + n_{-1} \geq 0 \implies n_{-1} \geq -k \\
n_{-1} &\geq 0 \\
n_0 &\geq 0 \implies n - k - 2 n_{-1} \geq 0 \implies n_{-1} \leq \dfrac{n - k}{2}
\end{align*}
Hence, the valid range for $n_{-1}$ is:
\begin{align}
n_{-1}^{\min} = \max(0, -k), \quad
n_{-1}^{\max} = \left\lfloor \dfrac{n - k}{2} \right\rfloor
\end{align}

\subsubsection*{The Multinomial Probability:}

By definition, the multinomial probability of observing counts $(n_{+1}, n_{-1}, n_0)$ is:
\begin{multline}
P(n_{+1}, n_{-1}, n_0) = \\
 \underbrace{\dfrac{n!}{n_{+1}! \, n_{-1}! \, n_0!}}_{\text{Multinomial Coefficient}}
\quad \mathbf{\times} \quad 
\underbrace{p_{\mathrm{FP}}^{n_{+1}} \, p_{\mathrm{FN}}^{n_{-1}} \, p_0^{n_0}}_{\text{Joint Probability Term.}}
\end{multline}

Then by substituting $n_{+1} = k + n_{-1}$ (Equation~\ref{eq:k}) and $n_0$ with Equation~\ref{eq:n0} into $P(S=k)$, we have:
\begin{equation*}
P(S=k) = P\left( \mathcal{B} = \dfrac{k}{n} \right).
\end{equation*}
This concludes the proof of Theorem~\ref{thm:bPmf}'s formula for $\mathcal{B}$'s PMF.
\end{proof}

\begin{proof}[Proof of Theorem~\ref{thm:bCdf}]
    We start by noting that $\mathcal{B} = \dfrac{S}{n}$, where $S = \mathrm{FP} - \mathrm{FN} = \sum_{i=1}^{n} X_i$ is the sum of independent random variables $X_i$ defined in Theorem~\ref{thm:bPmf} (Equation \ref{eq:xiCases}). Furthermore, we reframe $P(\mathcal{B}<b)$ as $P(\mathcal{B} \leq b) = P\left( \dfrac{S}{n} \leq b \right) = P\left( S \leq n b \right)$.

    Since $S=FP-FN$ is an integer ranging from $-n$ to $n$, the possible values of $\mathcal{B}$ are $\dfrac{k}{n}$ for integer $k\in[-n,n]$. Thus, $\mathcal{B}$'s CDF is:
    \begin{align*}
P\left( \mathcal{B} \leq b \right) = \sum_{k = -n}^{\lfloor n b \rfloor} P(S = k).
\end{align*}
\end{proof}
\section{Joint Ratio Metrics' CDF} \label{sect:jrmCdf}
In this section, we provide the full proof for Theorem~\ref{thm:jrmDist} which gives the CDF of Joint Ratio Metrics (JRMs). For simplicity in the following discussion, we define $k_{i+j}=k$ and $p_{i+j}=p$.
 
 \textbf{Theorem~\ref{thm:jrmDist}} (Restated). \emph{
The cumulative probability for a JRM score $S_{\mathrm{obs}}$ is given by:}
\begin{multline}
    P(S \leq S_{\mathrm{obs}}) \\
    = \sum_{k=1}^n P(c_{i+j}=k) P\left(\frac{k_i}{k} \leq S_{\mathrm{obs}} \bigg| c_{i+j}=k \right)\\
     = \sum_{k=1}^n \left(  \binom{n}{k}p^{k}(1-p)^{n-k}\cdot \sum_{k_i=0} ^ {k_i^{\max}}
        \binom{k}{k_i}\theta^{k_i} (1-\theta)^{k -k_i} \right)
\end{multline}
\begin{proof}
From our analysis in binomial metrics, we have $c_{i+j}$ and $c_i$ following binomial distributions:
\begin{align*}
    c_{i+j} \sim Binomial(n,p_{i+j}), \quad c_i \sim Binomial(n,p_i).
\end{align*}
Since $S$ is only defined for $k_i+k_j=k>0$, we focus on $k \geq 1$. We begin with the simplest solution in Equation \ref{eq:simpleJoint} which sums all probabilities for scores lower than $S_{\text{obs}}$.
\begin{align}\label{eq:simpleJoint}
P(S\leq S_\text{obs})
= \sum_{k=1}^{n}\sum_{k_i=0}^n  \begin{cases}
P(k_i \big| c_{i+j}=k) & \text{if }\frac{k_i}{k} \leq S_\text{obs}\\
0 & \text{otherwise.}
\end{cases}
\end{align}

By definition, if $c_i$ is sampled, then $c_{i+j}$ is also sampled, implying that $P(c_{i+j} \mid c_i) = 1$. Using Bayes' rule, we can express:
\begin{align*}
\theta = P(c_i \mid c_{i+j}) = \frac{p_i}{p_{i+j}}.
\end{align*}
Consequently, we model the conditional sampling from a JRM $M$ as:
\begin{align*}
c_i \mid c_{i+j} \sim \text{Binomial}(c_{i+j}, \theta).
\end{align*}
We now proceed to derive the PMFs to construct the CDFs. Since the events are binomially distributed, the PMFs are given by:
\begin{align}
    P(c_{i+j}) &= \binom{n}{c_{i+j}}p_{i+j}^{c_{i+j}}(1-p_{i+j})^{n-c_{i+j}}\\
    P(c_i|c_{i+j}) &= \binom{c_{i+j}}{c_i}\theta^{c_i}(1-\theta)^{c_{i+j}-c_i}
\end{align}
Then, the CDF of $\text{Binomial}(c_{i+j}, \theta)$ can be expressed as:
    \begin{multline}
        P\left(\frac{k_i}{k} \leq S_{obs} \bigg| c_{i+j} = k\right)\\
        = P\left(k_i \leq S_{obs}\cdot k \bigg| c_{i+j}=k\right)\\
        = \sum_{k_i=0} ^ {k_i^{\max}}
        \binom{k}{k_i}\theta^{k_i} (1-\theta)^{k -k_i}
    \end{multline}
where $k_i^{\max} = \lfloor S_{obs}\cdot c_{i+j}\rfloor$. 

By expanding the PMF of $c_{i+j}$ and combining it with the CDF of $c_i \mid c_{i+j}$, we prove Theorem~\ref{thm:jrmDist}.
\end{proof}

\section{Cross-Prior Smoothing}
We expand upon CPS with experiments and the proof for the bias-variance tradeoff.
\subsection{Additional Experiments}
\label{sec:cps}
We conduct extensive experiments across fifteen commonly used classification metrics, including the eight JRMs, accuracy, prevalence, predicted positive rate, marginal benefit, MCC, F$_1$ score, and prevalence threshold. The experiments are performed on two popular fairness datasets: the COMPAS dataset and the Folktable Income dataset. For each metric, we test 1.45 billion samples over the datasets, as described in Section~\ref{sec:cpsResultsMain}.

We compare the performance of the original metrics, metrics with additive smoothing (``bashful'' smoothing with $\varepsilon=1\times10^{-10}$ and the non-informative prior $\varepsilon=1$), and metrics smoothed with CPS and the priori scales $\lambda=5$, $\lambda=10$, and $\lambda=20$.
\subsection{Results and Analyses}
Our results show that CPS consistently improves the reliability of the classification metrics, particularly for small sample sizes. Figures~\ref{fig:mccAndTprMore} and~\ref{fig:acc_and_b_mse} illustrate the Mean-Squared Error (MSE) of various metrics as a function of the sample size, comparing the original metrics, metrics with additive smoothing, and CPS with different $\lambda$ values.

Our experimental analyses excluded undefined values, which could make direct comparisons between CPS and the original metrics (which often include such undefined cases) potentially unfair. Nevertheless, it is important to maintain scores that closely resemble the original metrics. Therefore, we set $\varepsilon=1e-10$ in Section~\ref{sec:cpsResultsMain}. By introducing CPS, we improve over the original metrics when undefined values arise, as illustrated in Figure~\ref{fig:mccAndTprMore}. Furthermore, Figure~\ref{fig:mccAndTprMore} also demonstrates that CPS performance improves when increasing $\lambda$ from 5 to 20.

However, in the case of metrics that are well defined even with zero counts (e.g., binomial metrics and marginal benefit), bashful smoothing has minimal impact on error rates, as shown in Figure~\ref{fig:acc_and_b_mse}. CPS similarly improves reliability in these metrics.

\begin{figure*}[htbp]
    \centering
    \begin{subfigure}[b]{0.495\linewidth}
        \centering
        \includegraphics[width=\linewidth]{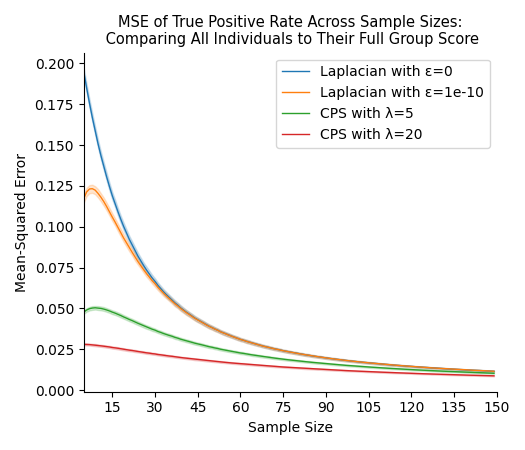}
        \label{fig:tprMore}
    \end{subfigure}
    \hfill
    \begin{subfigure}[b]{0.495\linewidth}
        \centering
        \includegraphics[width=\linewidth]{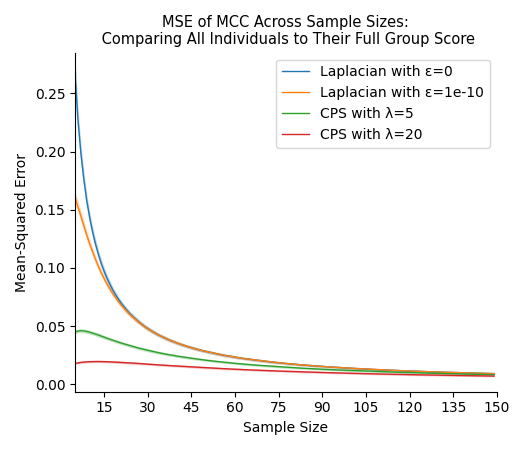}
        \label{fig:mccMore}
    \end{subfigure}
    \caption{Effect of smoothing techniques on metrics with undefined values. Bashful smoothing ($\varepsilon=1e^{-10}$) reduces errors compared to no smoothing ($\varepsilon=0$). Cross-Prior Smoothing (CPS) offers further improvements, with the stronger prior ($\lambda=20$) outperforming the weaker prior ($\lambda=5$), indicating that CPS uses sufficiently informative priors.}
    \label{fig:mccAndTprMore}
\end{figure*}
\begin{figure*}[htbp]
    \centering
    \begin{subfigure}[b]{0.495\linewidth}
        \centering
        \includegraphics[width=\linewidth]{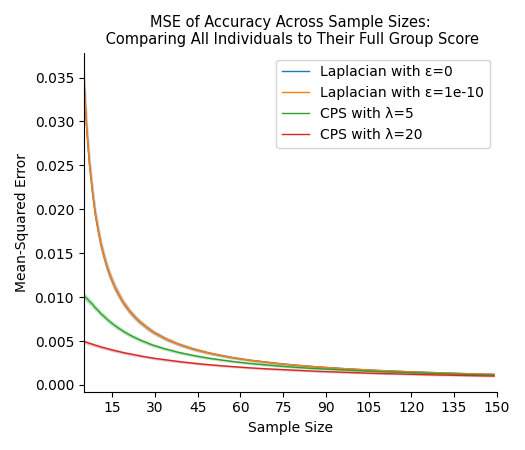}
        \label{fig:accMore}
    \end{subfigure}
    \hfill
    \begin{subfigure}[b]{0.495\linewidth}
        \centering
        \includegraphics[width=\linewidth]{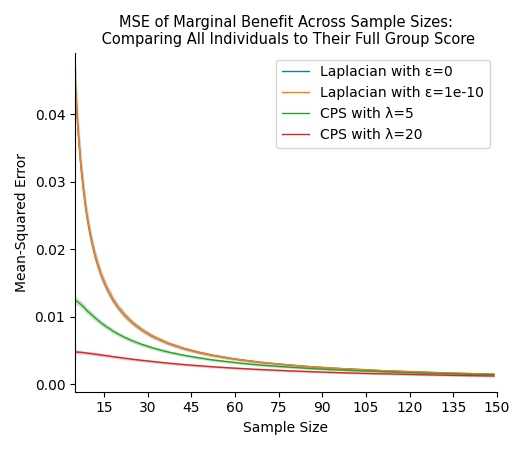}
        \label{fig:prevMore}
    \end{subfigure}
    \caption{Comparison of smoothing effects on metrics without holes. Results show that applying a small smoothing factor ($\varepsilon=1e^{-10}$) has minimal impact compared to no smoothing. CPS continues to reduce errors, following the same trend observed in Figure~\ref{fig:mccAndTprMore}.}
\label{fig:acc_and_b_mse}
\end{figure*}
We also explore the effect of the most common non-informative prior, $\varepsilon=1$. While this approach reduces errors for some metrics, it leads to inconsistent and less predictable results as shown in Figure~\ref{fig:prevAndFnr}. In some cases, smoothing with $\varepsilon=1$ performs worse than no smoothing (e.g., NPV, FOR, and prevalence). In other cases, it performs better than CPS with $\lambda=10$ for certain metrics (e.g., FNR and TPR). However, CPS with higher $\lambda$ values still outperforms additive smoothing with $\varepsilon=1$ overall.
\begin{figure*}[htbp]
    \centering
    \begin{subfigure}[b]{0.495\linewidth}
        \centering
        \includegraphics[width=\linewidth]{img/All_PREV_mse.png}
        \label{fig:prevWorse}
    \end{subfigure}
    \hfill
    \begin{subfigure}[b]{0.495\linewidth}
        \centering
        \includegraphics[width=\linewidth]{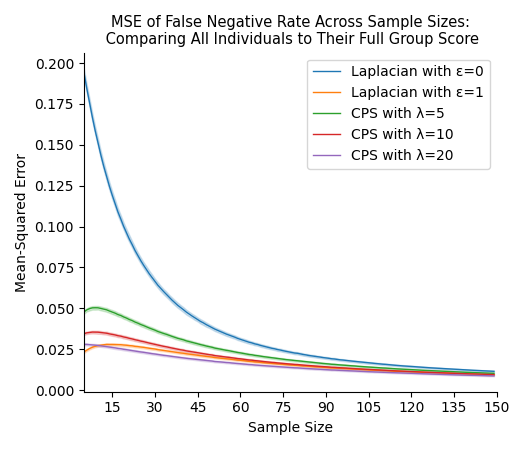}\label{fig:fnrBetter}
    \end{subfigure}
    \caption{Smoothing with $\varepsilon=1$ yields inconsistent results. In some metrics (left), it performs worse than no smoothing, while in others (right), it outperforms CPS with $\lambda=10$.}
\label{fig:prevAndFnr}
\end{figure*}
Our experiments confirm that the error bands (95\% confidence intervals) in the plots are narrow and often indistinguishable, indicating high consistency across datasets and groups. The large number of samples (one million) ensures that the results are statistically significant and representative.

\subsection{Bias and Variance}\label{sec:biasVar}

\textbf{Theorem~\ref{thm:biasVar}} (Restated). \emph{The bias and variance of the CPS estimator are $\mathcal{O}\left(\frac{\lambda}{\lambda + n}\right)$ and $\mathcal{O}\left(\frac{n}{(n + \lambda)^2}\right)$, respectively.}

\begin{proof}[Proof of Theorem~\ref{thm:biasVar}]
Consider a cell with true probability $p_c$ and reference probability $p'_c$.

\begin{itemize}
    \item Without smoothing, $\hat{p}_c = C / n$.
    \item With CPS, $\hat{p}_c^{CPS} = (C + \lambda p'_c) / (n + \lambda)$.
\end{itemize}

Substituting $\mathbb{E}[C] = np_c$ yields
\begin{equation*} 
    \mathbb{E}[\hat{p}_c^{CPS}] = (np_c + \lambda p'_c) / (n + \lambda),
\end{equation*}
so the bias is:
\[
\text{Bias}(\hat{p}_c^{CPS}) = \mathbb{E}[\hat{p}_c^{CPS}] - p_c = \frac{\lambda}{n + \lambda} (p'_c - p_c).
\]

As a binomial distribution, $\text{Var}(C) = np_c (1 - p_c)$. Since $\hat{p}_c^{CPS}$ is a sample proportion normalized by $n + \lambda$:
\[
\text{Var}(\hat{p}_c^{CPS}) = \frac{\text{Var}(C)}{(n + \lambda)^2} = \frac{np_c (1 - p_c)}{(n + \lambda)^2}.
\]

In summary,
\begin{align}
    \text{Bias}(\hat{p}_c^{CPS}) & = \frac{\lambda}{n + \lambda} (p'_c - p_c), \\
    \text{Var}(\hat{p}_c^{CPS}) & = \frac{np_c (1 - p_c)}{(n + \lambda)^2}.
\end{align}
\end{proof}

\section{Reproducing Results}
Anyone can reproduce our results by following these steps:

\begin{enumerate}
    \item Download the \href{https://github.com/jarrenbr/Algorithmic-Accountability-in-Small-Data}{repository}.
    \item Create a Python 3.11 virtual environment.
    \item Install the required dependencies:
    \begin{lstlisting}[language=bash]
    pip install -r requirements.txt
    \end{lstlisting}

    \item Run the main script:
    \begin{lstlisting}[language=bash]
    python main.py
    \end{lstlisting}
\end{enumerate}

\end{document}